\documentclass[twoside]{article}

\usepackage[preprint]{aistats2026}
\usepackage[round]{natbib}

\usepackage{enumitem}
\usepackage{mathtools}
\usepackage{algorithmic}
\usepackage{amsmath,amssymb,amsthm}
\usepackage{hyperref}
\usepackage[capitalize,noabbrev]{cleveref}
\usepackage{url}
\usepackage{booktabs}

\newtheorem{theorem}{Theorem}
\newtheorem{proposition}{Proposition}
\newtheorem{corollary}{Corollary}

\theoremstyle{definition}
\newtheorem{definition}{Definition}
\newtheorem{assumption}{Assumption}
\theoremstyle{remark}
\newtheorem{remark}{Remark}
 
\newcommand{\E}{\mathbb{E}}
\newcommand{\R}{\mathbb{R}}
\newcommand{\Prob}{\mathbb{P}}
\newcommand{\EU}{\mathrm{EU}}
\newcommand{\AU}{\mathrm{AU}}
\newcommand{\TU}{\mathrm{TU}}
\newcommand{\cA}{\mathcal{A}}
\newcommand{\cX}{\mathcal{X}}
\newcommand{\cQ}{\mathcal{Q}}
\newcommand{\dTV}{d_{\mathrm{TV}}}
\newcommand{\Bern}{\mathrm{Bern}}

\begin{document}

%

%

\twocolumn[

\aistatstitle{Epistemic Uncertainty Is Not the Reducible Kind}

\aistatsauthor{ Robin Young }

\aistatsaddress{ University of Cambridge } ]

\begin{abstract}
The standard taxonomy of predictive uncertainty defines epistemic uncertainty as the part removable by collecting more data, while the standard measure identifies it with a mutual-information term. We prove the definition and the measure are extensionally inconsistent. On an explicit construction, the measure assigns all uncertainty to the epistemic class, yet no quantity of training data reduces it. Reducibility is instead a property of the pair (uncertainty, acquisition class), and the dichotomy resolves into three parts: aleatoric, sample-reducible epistemic, and mechanism-reducible epistemic uncertainty. An exact identity for the value of an observation shows that in-distribution data never reduces mechanism-irreducible uncertainty and generically increases it. Ensemble disagreement, the deployed epistemic estimate, tracks the training procedure rather than the epistemic term. It collapses to zero beneath a positive truth under consistent training, and equals hyperparameter-scaled initialization noise under interpolation. A finite-sample falsification test and seed-swept experiments confirm the theory.
\end{abstract}

\section{Introduction}
\label{sec:intro}
 
The most-cited definitional sentence in deep-learning uncertainty quantification states that epistemic uncertainty is ``uncertainty which can be explained away given enough data'' \citep{kendall2017uncertainties}. The companion survey of \citet{hullermeier2021aleatoric} organizes the field around the same criterion: aleatoric uncertainty is the irreducible part, epistemic uncertainty the part that can in principle be reduced. Downstream work uses reducibility not as a property but as a definition, writing ``epistemic (reducible)'' as a bare parenthetical identity \citep[e.g.][]{henning2022ood,chen2026querylevel}, and the commitment is operational. Active-learning systems route high-epistemic inputs to additional sampling \citep{galislam2017deep,walmsley2020galaxy}, method papers validate estimators by exhibiting epistemic uncertainty shrinking with dataset size \citep{kendall2017uncertainties}, and at least one widely deployed estimator hardcodes the decay into its functional form \citep{amini2020deep}.
 
This paper demonstrates that the commitment is false constructively. Consider covariates $\cX = \{0,1\}$ with all training data drawn at $X = 0$, a fair latent bit $J$, labels at $X = 0$ that are fair coin flips regardless of $J$, and labels at $X = 1$ equal to $J$. Query the label at $X = 1$. The training data carries no information about $J$ at any sample size, so the mutual information between $J$ and the prediction target, which is the field's own epistemic measure, equals $\log 2$ forever yet a single observation at $X = 1$ reveals $J$ exactly. The uncertainty is epistemic by every substantive criterion (an oracle who knows $J$ has none of it), irreducible by any quantity of deployment-distribution data, and removable by one sample from elsewhere. The current taxonomy has no box for it.
 
The mechanism is that reducibility is not a property of uncertainty but of the pair (uncertainty, acquisition class). The textbook definition implicitly quantifies over one acquisition class (more i.i.d.\ draws from the training distribution) and everything that class cannot buy is thus misfiled. Once the quantifier is made explicit, the dichotomy resolves into a trichotomy of aleatoric uncertainty, sample-reducible epistemic uncertainty, and mechanism-reducible epistemic uncertainty, the last removable only by changing where the data comes from.
 
\textbf{Contributions.}
\begin{enumerate}
\item \textbf{A non-partition theorem and an inconsistency theorem} (\Cref{sec:partition}). We formalize acquisition classes, exhibit the construction above, and prove that no classification of uncertainty can simultaneously satisfy the field's measure (epistemic $=$ the mutual-information term) and the field's definition (epistemic $=$ reducible by more training data). The two commitments disagree extensionally on a constructible instance, and either retreat concedes the other (\Cref{thm:inconsistency}). A quantitative version gives the exchange rate between acquisition mechanisms: deployment-distribution data has rate exactly zero while off-support probes buy geometric decay at the Bhattacharyya exponent of the component laws (\Cref{thm:rates}).
\item \textbf{The exact value of an observation} (\Cref{sec:value}). A chain-rule identity gives the epistemic-uncertainty reduction from any observation as an information gain minus a redundancy term (\Cref{thm:value}). Two corollaries follow. Under exact unidentifiability, in-support data never reduces the epistemic term and generically increases it. The optimal probe location maximizes information about the latent minus its redundancy given the target, a term absent from information-gain acquisition rules \citep{houlsby2011bald}.
\item \textbf{Ensemble disagreement tracks optimization, not the epistemic term} (\Cref{sec:ensembles}). In the amortized regime, every dispersion functional converges to zero while the epistemic term is fixed and positive (\Cref{thm:collapse}); in the complementary interpolation regime, ensemble variance equals $\alpha^2 \lVert P_N x_\star \rVert^2$ in closed form, which is initialization noise at a hyperparameter-set scale, unbounded in both directions relative to the truth (\Cref{thm:inflation}).
\item \textbf{Experiments} (\Cref{sec:test,sec:experiments}). Monotonicity of the epistemic term under exact unidentifiability yields a level-$\alpha$ finite-sample test that rejects mechanism-irreducibility upon a significant decrease. Seed-swept experiments confirms the strict increase ($p \le 4 \times 10^{-10}$ across all increments), rejection rates of $0/10$ and $10/10$ on the two controls, closed-form agreement of the inflation regime to median relative error $1.9\%$, and a three-order-of-magnitude collapse of the ensemble estimator against a flat truth.
\end{enumerate}

We do not claim the conceptual observation that the dichotomy conflates the source of uncertainty with its reducibility. \citet{baan2023uncertainty} state it in survey form, \citet{derkiureghian2009aleatory} argue model-relativity, and the survey of \citet{hullermeier2021aleatoric} itself asks what reducible means before adopting the criterion. Our contribution is the proof with the formal counterexample, the extensional inconsistency, the exact value identity with its negative-value corollary, the quantitative rates, and the two-sided ensemble separation.
 
\section{Related Work}
\label{sec:related}
 
\textbf{The taxonomy and its critics.} The aleatoric/epistemic distinction enters machine learning through \citet{derkiureghian2009aleatory}, \citet{kendall2017uncertainties}, and the survey of \citet{hullermeier2021aleatoric}, with reducibility as the organizing criterion throughout. A critical literature has begun to form. \citet{baan2023uncertainty} observe that source and reducibility are orthogonal axes, \citet{gruber2023sources} note that reducibility depends on what counts as knowledge and an ICLR blog post works through reducibility paradoxes informally \citep{kirchhof2025reexamining}. Closest to us, \citet{ruegamer2026epistemic} study epistemic uncertainty under parameter non-identifiability in overparametrized networks, showing persistent parameter uncertainty when the function is fully identified. Their setting is a dual of ours as their irreducible uncertainty lives in weight space and does not touch the predictive. On the other hand ours lives in the predictive target $I(J; Y_\star \mid D_n)$, which is the quantity practitioners act on. None of these works state the non-partition counterexample or the measure--definition inconsistency as theorems, which is the gap we fill.
 
\textbf{Measures and their axiomatics.} The entropy decomposition with the mutual-information term as the epistemic measure is standard \citep{depeweg2018decomposition,smith2018understanding,malinin2018predictive}. \citet{wimmer2023quantifying} critique the measure axiomatically, asking whether conditional entropy and mutual information are internally well-behaved. Our \Cref{thm:inconsistency} is orthogonal as we show the measure and the reducibility definition are jointly inconsistent regardless of either's internal axiomatics. \citet{bengs2022pitfalls} prove that loss minimization does not incentivize faithful second-order uncertainty. Our \Cref{thm:collapse} is complementary, showing that even correctly incentivized first-order training destroys the ensemble's epistemic signal.
 
\textbf{Acquisition.} BALD \citep{houlsby2011bald,kirsch2019batchbald} acquires the point maximizing $I(y; \theta \mid x, D)$, the mutual information between observation and model parameters. Prediction-oriented variants \citep{bickfordsmith2023prediction} weight information by downstream relevance. Our \Cref{cor:acquisition} supplies the exact latent-information form of the correction and a phenomenon outside both frameworks with observations whose net value is strictly negative.
 
\textbf{Floors from indistinguishability.} The quantity our construction holds positive is the log-loss minimum excess risk of \citet{xu2022minimum} in its non-vanishing regime. We regard it as the instance of one phenomenon, floors from indistinguishability, and return to this in the discussion. \citet{foster1998asymptotic} supply the deep precedent that the field's trust certificates certify less than assumed, that calibration is achievable against an adversarial sequence. \citet{osband2018randomized} establish the sample-then-optimize mechanism underlying our \Cref{thm:inflation}, which we use contrapositively.
 
\section{Setup}
\label{sec:setup}
 
\subsection{Predictive problems and the standard decomposition}
 
\begin{definition}[Predictive problem]
\label{def:problem}
A predictive problem is a tuple $P = (\Pi, \rho, \{p(\cdot \mid x, j)\}, x_\star)$: a prior $\Pi$ over a latent $J$, a sampling distribution $\rho$ on covariates $\cX$, conditional label laws, and a query point. Data $D_n = \{(X_i, Y_i)\}_{i=1}^n$ with $X_i \overset{\text{iid}}{\sim} \rho$ and $Y_i \sim p(\cdot \mid X_i, J)$ conditionally independent given $J$; the target is $Y_\star \sim p(\cdot \mid x_\star, J)$, conditionally independent of $D_n$ given $J$.
\end{definition}
 
The standard decomposition of total predictive uncertainty \citep{depeweg2018decomposition,hullermeier2021aleatoric} is
\begin{equation}
\label{eq:decomp}
\underbrace{\E\, H(Y_\star \mid D_n)}_{\TU(n)} \;=\; \underbrace{\E\, H(Y_\star \mid J, D_n)}_{\AU(n)} \;+\; \underbrace{\E\, I(J; Y_\star \mid D_n)}_{\EU(n)},
\end{equation}
and the mutual-information term $\EU(n)$ is the field's epistemic measure. The field's epistemic definition is reducibility, namely that epistemic uncertainty is the part that vanishes with more training data. To make the definition precise we must say which data.

\begin{definition}[Acquisition classes]
\label{def:acquisition}
An acquisition plan is a (possibly adaptive) policy for collecting additional observations. An acquisition class $\cA$ is a set of plans containing the null plan. The $\cA$-irreducible epistemic uncertainty is
\begin{equation}
\EU_\infty^{\cA} \;:=\; \inf_{a \in \cA,\; k \ge 0}\; \E\, I\bigl(J;\, Y_\star \mid D_n,\, Z^a_{1:k}\bigr),
\end{equation}
where $Z^a_{1:k}$ is the data acquired by plan $a$ with budget $k$. Since $\cA$ contains the null plan, $\EU_\infty^{\cA} \le \EU(n)$, and $\cA \subseteq \cA'$ implies $\EU_\infty^{\cA} \ge \EU_\infty^{\cA'}$. We write $\cA_\rho$ for the class consisting of unlimited additional i.i.d.\ draws from $\rho$.
\end{definition}
 
More formally, the textbook commitment of uncertainty is epistemic if and only if $\EU_\infty^{\cA_\rho} = 0$. Relative to a reference class this induces a trichotomy: \emph{aleatoric} uncertainty $\AU$, which survives conditioning on $J$; \emph{sample-reducible epistemic} uncertainty $\EU(n) - \EU_\infty^{\cA_\rho}$, removable by more of the same data; and \emph{mechanism-reducible epistemic} uncertainty $\EU_\infty^{\cA_\rho} - \EU_\infty^{\cA'}$ for richer classes $\cA'$, removable only by changing where the data comes from. The results below show the third class is nonempty, that it is exactly where the field's commitments collide, and that it is invisible to the field's estimators.
 
\subsection{Constructions}
 
\begin{definition}[Two-region Bernoulli]
\label{def:construction}
Let $\cX = \{0,1\}$, $\rho = \delta_0$, and $J \sim \Bern(\tfrac12)$. Labels: $Y \mid X{=}0, J \sim \Bern(\tfrac12)$ for both values of $J$; and $Y \mid X{=}1, J \sim \Bern(q_J)$ with $q_0 = \varepsilon$, $q_1 = 1 - \varepsilon$, $\varepsilon \in [0, \tfrac12)$. The query is $x_\star = 1$. We call $\varepsilon = 0$ the noiseless construction, in which $Y_\star = J$ almost surely.
\end{definition}

For intuition, let us walk through the noiseless case. Every training point sits at $X = 0$ and carries a fair coin flip uncorrelated with $J$, so a dataset of any size is exactly as informative about $J$ as an empty one. The target $Y_\star$ is $J$. So the predictive uncertainty at the query is one bit, all of it is mutual information with the latent, none of it is aleatoric (conditioning on $J$ removes it entirely), and no quantity of training data touches it, yet a single label observed at $X = 1$ removes all of it. Every theorem following this can be read off this example as the noisy version ($\varepsilon > 0$) and the Gaussian-process instance below add rates and geometry, not phenomena.

The construction requires nothing but a latent that controls labels only off the sampling support. For the continuous instance we use kernels $k_j(x,x') = a_j(x) a_j(x') c(x,x')$ with amplitudes $a_j \equiv 1$ on $\mathrm{supp}\,\rho$ and $a_0(x_\star) \neq a_1(x_\star)$, so that the two components induce identical context laws while differing at the query. We say exact unidentifiability holds when the conditional law of any in-support observation given $(D_n, J)$ is the same for all $J$, and both constructions satisfy it.
 
\section{The Dichotomy Is Not a Partition}
\label{sec:partition}
 
\begin{theorem}[Non-partition]
\label{thm:partition}
In the noiseless two-region construction: (i) $\EU(n) = \log 2$ for all $n$, and $\EU_\infty^{\cA_\rho} = \log 2$; (ii) $\AU(n) = 0$ for all $n$; (iii) for any class $\cA'$ containing a single draw at $X = 1$, $\EU_\infty^{\cA'} = 0$.
\end{theorem}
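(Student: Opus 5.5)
The plan is to reduce everything to one independence fact: in the noiseless construction the entire training set $D_n$, together with any further data gathered under $\cA_\rho$, is independent of $J$. I will prove this first and then read off (i)--(iii) from it.

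Since $\rho = \delta_0$, every covariate satisfies $X_i = 0$ almost surely. Given $J$, the labels are i.i.d.\ $\Bern(\tfrac12)$ whatever the value of $J$. So the joint law of $(J, D_n)$ factorizes as $\Bern(\tfrac12) \otimes (\delta_0 \otimes \Bern(\tfrac12))^{\otimes n}$, which gives $D_n \perp J$. The same argument applies to $D_n$ augmented by any number $k$ of additional draws from $\rho$. I also have to handle adaptive plans in $\cA_\rho$. Each acquired point is $(0, Y)$ with $Y \sim \Bern(\tfrac12)$, drawn independently of $J$ and of the past, so stopping rules and adaptivity cannot create dependence. A short induction on $k$ makes this rigorous: the conditional law of each new observation given $(J, \text{history})$ does not depend on $J$. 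This induction is the only step needing any care, and I regard it as the main (if modest) obstacle.

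For (ii), $Y_\star = J$ almost surely when $\varepsilon = 0$. Hence $H(Y_\star \mid J, D_n) = 0$, so $\AU(n) = 0$.

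For (i), independence gives that the posterior of $J$ given $D_n$ is the prior $\Bern(\tfrac12)$. Therefore
\[
I(J; Y_\star \mid D_n) = I(J; J \mid D_n) = H(J \mid D_n) = H(J) = \log 2
\]
for every realization of $D_n$, and so $\EU(n) = \log 2$. Replacing $D_n$ by $(D_n, Z^a_{1:k})$ for any $a \in \cA_\rho$ and any $k$ gives the same value $\log 2$. Taking the infimum yields $\EU_\infty^{\cA_\rho} = \log 2$.

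For (iii), let $a$ be the plan in $\cA'$ that makes one draw at $X = 1$. With $k = 1$ it returns $Z_1 = (1, Y')$, where $Y' = J$ almost surely. Then $H(J \mid D_n, Z_1) = 0$, so
\[
I(J; Y_\star \mid D_n, Z_1) \le H(J \mid D_n, Z_1) = 0 .
\]
Since mutual information is nonnegative, the infimum defining $\EU_\infty^{\cA'}$ equals $0$. This is consistent with the monotonicity noted after \Cref{def:acquisition}.
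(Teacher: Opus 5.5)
Your proof is correct and follows essentially the same route as the paper. Both establish that $D_n$, together with any further $\cA_\rho$-acquired data, is independent of $J$, and hence of $(J, Y_\star)$ since $Y_\star = J$; both then read off (i)--(iii) directly. Your induction over adaptive plans is a more careful version of the paper's one-line remark that independence persists when additional $\rho$-draws are appended.
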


\begin{proof}
The law of $D_n$ given $J = j$ is $(\delta_0 \otimes \Bern(\tfrac12))^{\otimes n}$ for both $j$, so $D_n \perp J$; since $Y_\star = J$, $\sigma(J, Y_\star) = \sigma(J)$ and hence $D_n \perp (J, Y_\star)$, and the same holds with any number of additional $\rho$-draws appended. Therefore $\EU(n) = I(J; Y_\star) = H(J) = \log 2$ under every plan in $\cA_\rho$, giving (i). $H(Y_\star \mid J, D_n) = H(J \mid J, D_n) = 0$, giving (ii). A draw $Z$ at $X = 1$ equals $J$ almost surely, so $I(J; Y_\star \mid D_n, Z) = 0$, giving (iii).
\end{proof}
 
\begin{corollary}[Trichotomy]
\label{cor:trichotomy}
There exists uncertainty that is (a) not aleatoric, (b) not reducible by any quantity of deployment-distribution data, and (c) removable by a single observation acquired through a different mechanism. Reducibility is therefore a property of the pair (uncertainty, acquisition class), and the dichotomy $\{\text{aleatoric}\} \sqcup \{\text{reducible epistemic}\}$ omits a nonempty class.
\end{corollary}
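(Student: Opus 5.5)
The plan is to read the corollary off \Cref{thm:partition}, taking the noiseless two-region construction as the witness. The uncertainty in question is the predictive uncertainty at the query $x_\star = 1$, i.e.\ the quantity $\TU(n) = \E\,H(Y_\star \mid D_n)$. Since $Y_\star = J$ with $J \sim \Bern(\tfrac12)$ and $D_n \perp J$, this equals $\log 2 > 0$, so the uncertainty is nontrivial. I would then verify the three clauses in order, each against one item of the theorem.

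For (a), item (ii) gives $\AU(n) = 0$. Combined with \eqref{eq:decomp}, the whole $\log 2$ sits in the mutual-information term $\EU(n)$. So the uncertainty is not aleatoric in the decomposition's sense, and it vanishes under an oracle that knows $J$.

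For (b), item (i) gives $\EU_\infty^{\cA_\rho} = \log 2 = \EU(n)$. By \Cref{def:acquisition}, the infimum over all plans and budgets in $\cA_\rho$ does not move the epistemic term at all. The sample-reducible part $\EU(n) - \EU_\infty^{\cA_\rho}$ is therefore zero, and no amount of deployment-distribution data reduces it.

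For (c), item (iii) applies to the class $\cA'$ obtained by adjoining a single draw at $X = 1$ to $\cA_\rho$. It gives $\EU_\infty^{\cA'} = 0$, and the infimum is attained with budget $k = 1$. Hence the mechanism-reducible part $\EU_\infty^{\cA_\rho} - \EU_\infty^{\cA'} = \log 2 > 0$.

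For the interpretive conclusion, the same uncertainty receives the verdict ``irreducible'' relative to $\cA_\rho$ and ``reducible'' relative to $\cA'$. So reducibility cannot be a function of the uncertainty alone; it depends on the acquisition class. Under the textbook reading (epistemic iff $\EU_\infty^{\cA_\rho} = 0$), this mass is neither aleatoric by (a) nor reducible epistemic by (b). Hence the dichotomy $\{\text{aleatoric}\} \sqcup \{\text{reducible epistemic}\}$ leaves it unclassified, while the third class of the trichotomy is nonempty by (c).

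The only delicate step is the bookkeeping in the last part: making precise that ``the dichotomy omits a class'' means the witness is positive mass outside both sets, rather than a misclassification under some other convention. I would handle this by stating explicitly the reference class $\cA_\rho$ used in the textbook definition. Everything else is a direct citation of \Cref{thm:partition}.
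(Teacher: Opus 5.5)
Your proposal is correct and follows the paper's route: the paper states the corollary as an immediate consequence of \Cref{thm:partition} on the noiseless two-region construction, with items (ii), (i), and (iii) supplying clauses (a), (b), and (c) respectively. Your explicit values for the sample-reducible part ($0$) and the mechanism-reducible part ($\log 2$) are the trichotomy bookkeeping the paper sets up after \Cref{def:acquisition}, so nothing is missing.
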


The noisy construction turns the trichotomy quantitative. Each acquisition mechanism comes with an exchange rate, and the rate for the deployment distribution is exactly zero.
 
\begin{theorem}[Exchange rates between mechanisms]
\label{thm:rates}
In the two-region construction with $\varepsilon \in (0, \tfrac12)$: (i) $\EU(n) = \log 2 - h(\varepsilon)$ for all $n$ under $\cA_\rho$, where $h$ is the binary entropy in nats; (ii) after $k$ i.i.d.\ probes at $X = 1$, writing $\rho_B := 2\sqrt{\varepsilon(1-\varepsilon)} \in (0,1)$ for the Bhattacharyya coefficient of the pair $(\Bern(\varepsilon), \Bern(1{-}\varepsilon))$,
\begin{equation}
\frac{\log 2 - h(\varepsilon)}{2}\, \rho_B^{\,2k} \;\le\; \EU_k \;\le\; \rho_B^{\,k}.
\end{equation}
\end{theorem}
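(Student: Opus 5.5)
Part (i) follows the proof of \Cref{thm:partition}. The law of $D_n$ given $J=j$ is $(\delta_0\otimes\Bern(\tfrac12))^{\otimes n}$ for both $j$, and the same holds with any number of appended $\rho$-draws. Hence $D_n\perp (J,Y_\star)$, and
\[
\EU(n)=I(J;Y_\star)=H(Y_\star)-H(Y_\star\mid J).
\]
The marginal of $Y_\star$ is $\tfrac12\Bern(\varepsilon)+\tfrac12\Bern(1-\varepsilon)=\Bern(\tfrac12)$, so $H(Y_\star)=\log 2$. Given $J$, $Y_\star$ is $\Bern(\varepsilon)$ or $\Bern(1-\varepsilon)$, so $H(Y_\star\mid J)=h(\varepsilon)$. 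This holds uniformly over plans in $\cA_\rho$.

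For (ii), write $Z_{1:k}$ for the probes. Since $D_n$ is independent of everything else, $\EU_k=\E\,I(J;Y_\star\mid Z_{1:k})$. Let $\pi_k=\Prob(J=1\mid Z_{1:k})$. Conditionally on $Z_{1:k}$, the pair $(J,Y_\star)$ is a binary latent with posterior $\pi_k$ passed through the binary symmetric channel of crossover $\varepsilon$. The plan is to sandwich the mutual information of this channel between functions of $\pi_k(1-\pi_k)$, and then compute $\E\sqrt{\pi_k(1-\pi_k)}$ or a related quantity via the Bhattacharyya coefficient.

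\textbf{Upper bound.} I would use $I(J;Y_\star\mid Z)\le H(J\mid Z=z)=h(\pi_k)$ and the standard inequality $h(p)\le 2\log 2\sqrt{p(1-p)}$. A cleaner route is $h(p)\le 2\sqrt{p(1-p)}$ in nats, which I would check at $p=\tfrac12$, where $\log 2<1$, and near $0$. It then suffices to compute
\[
\E\sqrt{\pi_k(1-\pi_k)}=\sum_z\sqrt{\Prob(z,J{=}0)\,\Prob(z,J{=}1)}=\tfrac12\,\rho_B^{\,k}.
\]
The equality holds because the joint weights are $\tfrac12 P_j(z)$ and the per-sample Bhattacharyya coefficients multiply over i.i.d.\ samples. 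This gives $\EU_k\le\rho_B^{\,k}$, with a factor of 2 to spare.

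\textbf{Lower bound.} This is the main obstacle, because we need $I$ of the channel to be at least proportional to the latent's uncertainty. First, I would show $I(J;Y_\star\mid Z=z)\ge(\log2-h(\varepsilon))\cdot 4\pi(1-\pi)$ up to the stated constant. The map $\pi\mapsto I(\pi)$ for a BSC is concave, symmetric, vanishes at $0$ and $1$, and equals $\log2-h(\varepsilon)$ at $\tfrac12$. By concavity it dominates the tent $(\log2-h(\varepsilon))\cdot 2\min(\pi,1-\pi)$, and $2\min(\pi,1-\pi)\ge 2\pi(1-\pi)$. Second, I would lower bound $\E[\pi_k(1-\pi_k)]$ in terms of $\rho_B^{2k}$. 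Writing $\E[\pi(1-\pi)]=\tfrac14\sum_z \frac{P_0(z)P_1(z)}{\tfrac12(P_0+P_1)(z)}\cdot\tfrac12$ and applying Cauchy--Schwarz,
\[
\Bigl(\sum_z\sqrt{P_0P_1}\Bigr)^2\le \sum_z\frac{P_0P_1}{(P_0+P_1)/2}\cdot\sum_z\frac{P_0+P_1}{2},
\]
gives $\E[\pi_k(1-\pi_k)]\ge\tfrac14\rho_B^{\,2k}$. Combining, $\EU_k\ge(\log2-h(\varepsilon))\cdot 2\cdot\tfrac14\rho_B^{2k}$, which is the claimed $\tfrac12(\log2-h(\varepsilon))\rho_B^{2k}$. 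I would double-check these constants, since concavity plus Cauchy--Schwarz is where slack could break the exact factor.
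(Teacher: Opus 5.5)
Your proposal is correct and follows the paper's proof essentially step for step: the same independence argument for (i), the same bound $H(J\mid Z{=}z)\le 2\sqrt{\pi_0\pi_1}$ with Bhattacharyya tensorization for the upper bound, and the same chord/tent bound $I\ge 2(\log 2-h(\varepsilon))\min_j\pi_j$ followed by Cauchy--Schwarz for the lower bound. The one variation is that you pass to $\pi(1-\pi)\le\min_j\pi_j$ and apply Cauchy--Schwarz with mixture weights ($\sum_z m=1$), whereas the paper bounds $\E\min_j\pi_j$ directly using $\sum_z\max_j P_j(z)\le 2$; both give $\rho_B^{2k}/4$ and hence the same constant.

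Two constant slips need fixing. First, the identity should read $\E[\pi(1-\pi)]=\tfrac14\sum_z P_0P_1/\bigl(\tfrac12(P_0+P_1)\bigr)$, without your trailing factor $\tfrac12$. Your stated conclusion $\tfrac14\rho_B^{2k}$ is what the corrected identity gives. Second, the upper bound lands exactly on $\rho_B^{k}$, with no factor of $2$ to spare; the sharper $2\log 2$ constant buys only a factor $\log 2$.
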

 
\begin{proof}
(i) As in \Cref{thm:partition}, $D_n \perp (J, Y_\star)$, and $I(J; Y_\star) = H(Y_\star) - H(Y_\star \mid J) = \log 2 - h(\varepsilon)$ since the marginal of $Y_\star$ is $\Bern(\tfrac12)$ by symmetry of the prior. (ii) For the upper bound, let $\pi_j(z)$ be the posterior over $J$ given probes $z = z_{1:k}$. Data processing and $h(p) \le 2\sqrt{p(1-p)}$ give $I(J; Y_\star \mid Z{=}z) \le H(J \mid Z{=}z) \le 2\sqrt{\pi_0(z)\pi_1(z)}$. With $m$ the mixture law of $Z_{1:k}$ and $p_j$ the component laws, $\sqrt{\pi_0\pi_1} = \tfrac12 \sqrt{p_0 p_1}/m$, so $\E\sqrt{\pi_0\pi_1} = \tfrac12 \sum_z \sqrt{p_0(z)p_1(z)} = \tfrac12 \rho_B^k$ by tensorization of the Bhattacharyya coefficient; combine. For the lower bound, given $Z = z$ we have $I(J; Y_\star \mid z) = h(p(z)) - h(\varepsilon)$ with $p(z) = \varepsilon + \pi_1(z)(1-2\varepsilon)$; concavity of $h$ above its chord on $[\varepsilon, \tfrac12]$ gives $h(p) - h(\varepsilon) \ge 2(\log 2 - h(\varepsilon))\min_j \pi_j(z)$, and Cauchy--Schwarz applied to $\rho_B^k = \sum_z \sqrt{\min \cdot \max}$ with $\sum_z \max \le 2$ gives $\E[\min_j \pi_j] \ge \rho_B^{2k}/4$; combine.
\end{proof}
 
\begin{corollary}[Overlap exchange rate]
\label{cor:overlap}
Modify the two-region construction ($\varepsilon \in (0,\tfrac12)$) so that $X_i \overset{\text{iid}}{\sim} \rho_\eta := (1-\eta)\,\delta_0 + \eta\,\delta_1$, $\eta \in [0,1]$, and write $c_\varepsilon := \log 2 - h(\varepsilon)$, $\EU_\eta(n) := \E\, I(J; Y_\star \mid D_n)$. Then
\begin{equation}
\label{eq:overlap}
\tfrac{c_\varepsilon}{2}\bigl(1 - \eta\,(1-2\varepsilon)^2\bigr)^{n} \;\le\; \EU_\eta(n) \;\le\; \bigl(1 - \eta\,(1-\rho_B)\bigr)^{n},
\end{equation}
recovering \Cref{thm:rates}(ii) at $\eta = 1$ and the flat floor of \Cref{thm:partition} at $\eta = 0$. For every $\eta > 0$ the decay is geometric, so (D) holds verbatim off the boundary; but $\EU_\eta(n) \ge c_\varepsilon/4$ whenever $n\eta\,(1-2\varepsilon)^2 \le \tfrac12$, so the sample complexity of the deployment mechanism diverges as $1/\eta$ and a budget of $n$ samples buys essentially nothing unless $\eta \gtrsim 1/n$. Proof in \Cref{app:overlap}.
\end{corollary}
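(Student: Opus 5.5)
The plan is to reduce the overlap case to \Cref{thm:rates}(ii) by conditioning on how many training covariates land at $X=1$. Write $K_n := \#\{i : X_i = 1\}$, so $K_n \sim \mathrm{Bin}(n,\eta)$, independent of $J$ because $\rho_\eta$ does not depend on $J$. Conditionally on the covariate vector, the labels at $X=0$ are fair coin flips independent of $(J, Y_\star)$, exactly as in \Cref{thm:partition}, and the $K_n$ labels at $X=1$ are i.i.d.\ $\Bern(q_J)$ probes. Hence, conditionally on $K_n = k$, the quantity $\E\, I(J; Y_\star \mid D_n)$ equals the quantity $\EU_k$ of \Cref{thm:rates}(ii). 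To justify this I will use the chain rule to drop the uninformative $X=0$ block: it is independent of $(J, Y_\star, \text{probes})$ given the covariates. Averaging over $K_n$ then gives
\begin{equation*}
\EU_\eta(n) = \sum_{k=0}^n \binom{n}{k}\eta^k(1-\eta)^{n-k}\,\EU_k .
\end{equation*}

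Both bounds then follow by plugging in the bounds from \Cref{thm:rates}(ii) and using the binomial generating function $\E[s^{K_n}] = (1-\eta+\eta s)^n$. For the upper bound, $\EU_k \le \rho_B^k$ gives $\EU_\eta(n) \le (1-\eta+\eta\rho_B)^n = (1-\eta(1-\rho_B))^n$, as claimed. For the lower bound, $\EU_k \ge \tfrac{c_\varepsilon}{2}\rho_B^{2k}$ gives $\EU_\eta(n) \ge \tfrac{c_\varepsilon}{2}(1-\eta+\eta\rho_B^2)^n$. Since $\rho_B^2 = 4\varepsilon(1-\varepsilon)$, we have $1-\rho_B^2 = (1-2\varepsilon)^2$, and this is exactly $\tfrac{c_\varepsilon}{2}(1-\eta(1-2\varepsilon)^2)^n$.

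The two endpoint cases can be read off directly. At $\eta=1$ we have $K_n = n$ deterministically, which recovers \Cref{thm:rates}(ii). At $\eta = 0$ we have $K_n = 0$, so $\EU_\eta(n) = \EU_0 = c_\varepsilon$, which is the flat floor of \Cref{thm:rates}(i) and \Cref{thm:partition}. For every $\eta>0$ both bases are strictly less than $1$, so the decay is geometric. For the sample-complexity claim, set $t = \eta(1-2\varepsilon)^2 \in [0,1]$. Bernoulli's inequality gives $(1-t)^n \ge 1 - nt \ge \tfrac12$ whenever $nt \le \tfrac12$, so $\EU_\eta(n) \ge c_\varepsilon/4$. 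The threshold $n \gtrsim 1/\eta$ is then immediate.

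The main obstacle is making the conditioning step rigorous. One must check that $\E\, I(J;Y_\star\mid D_n)$, averaged over the random covariate pattern, really equals the mixture of the $\EU_k$. This uses two facts. First, the covariates are ancillary (independent of $J$), so conditioning on them is harmless. Second, the mutual information conditioned on $D_n$ depends on $D_n$ only through the multiset of labels at $X=1$, with the fair-coin labels at $X=0$ contributing nothing. I would state this as a short sufficiency lemma. The lemma says that the posterior of $J$ given $D_n$ equals its posterior given the $X=1$ labels, because the likelihood factorizes and the $X=0$ factor is constant in $J$. Everything after that is the generating-function computation above.
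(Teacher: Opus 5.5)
Your proposal is correct and matches the paper's proof essentially step for step: condition on the covariates, discard the $J$-free labels at $X=0$, write $\EU_\eta(n) = \E_K[\EU_K]$ with $K\sim\mathrm{Bin}(n,\eta)$ independent of $J$, average the bounds of \Cref{thm:rates}(ii) against $\E[s^K] = (1-\eta+\eta s)^n$ at $s=\rho_B^2$ and $s=\rho_B$, and finish with Bernoulli's inequality. The one detail the paper states explicitly and you leave implicit is that the \Cref{thm:rates}(ii) bounds are also needed at $k=0$, because the term $K=0$ has positive probability; there they read $c_\varepsilon/2 \le c_\varepsilon \le 1$, which holds because $c_\varepsilon \le \log 2 < 1$ in nats.
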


The factor-two exponent gap is the standard Bhattacharyya slack and tightening it changes nothing below. The substantive content is the contrast. The per-probe value of the off-support mechanism is a distinguishability coefficient of the component laws, the per-sample value of the deployment mechanism is identically zero, and \Cref{cor:overlap} interpolates. \Cref{thm:partition} is a boundary fact approached continuously, exact at $\eta = 0$ and operationally intact for finite budgets whenever the overlap is $O(1/n)$.

\begin{theorem}[Measure--definition inconsistency]
\label{thm:inconsistency}
No classification of predictive uncertainty simultaneously satisfies (M) epistemic uncertainty equals the mutual-information term of \cref{eq:decomp}, and (D) epistemic uncertainty is the part that vanishes with more training data.
\end{theorem}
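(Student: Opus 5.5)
The plan is to argue by contradiction on the noiseless two-region construction of \Cref{def:construction}, using \Cref{thm:partition} as the only input. First I will fix what a ``classification'' means so that (M) and (D) are statements about the same object. A classification assigns to each predictive problem $P$ and sample size $n$ an epistemic amount $E_P(n) \ge 0$, with the remainder of $\TU(n)$ labelled aleatoric. Under this reading, (M) says $E_P(n) = \EU(n)$ for every $P$ and $n$. (D) says that $E_P(n)$ vanishes under more training data, i.e.\ $\EU_\infty^{\cA_\rho}$ computed for the epistemic part is zero; equivalently, any uncertainty not removable by $\cA_\rho$ is not classified epistemic. I will state both formally in terms of $\cA_\rho$ from \Cref{def:acquisition}, since (D) implicitly quantifies over that class.

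Next I instantiate at the noiseless construction. By \Cref{thm:partition}(i), $\EU(n) = \log 2$ for every $n$, and $\EU_\infty^{\cA_\rho} = \log 2$. So (M) forces $E_P(n) = \log 2$ for all $n$. However, (D) requires the epistemic amount to tend to $0$ under $\cA_\rho$, or equivalently to be at most $\EU(n) - \EU_\infty^{\cA_\rho} = 0$. The two values $\log 2$ and $0$ are incompatible, which gives the contradiction. I will also point out that the disagreement is extensional: it occurs on a single explicit instance and is not an artefact of limits.

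Finally I address the two ``retreats'' mentioned in the introduction. Suppose we keep (D) and drop (M). Then by \Cref{thm:partition}(ii), $\AU(n) = 0$, so the full $\log 2$ bit would have to be filed as non-epistemic even though it is not aleatoric. The classification would then contradict the measure and fail to sum to $\TU$ under the standard decomposition, unless we add a third class; this is \Cref{cor:trichotomy}. Suppose instead we keep (M) and drop (D). Then epistemic uncertainty is no longer reducible by more training data, which abandons the definition. Optionally, I can strengthen the result using \Cref{thm:rates}(i), which shows the conflict persists for every $\varepsilon \in (0,\tfrac12)$, so it is not a degenerate noiseless effect.

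The main obstacle is the first step. ``Classification'' and ``vanishes with more training data'' are informal, and the proof is only as strong as the formalization. I expect to spend the care on stating (D) as $E_P(n) \le \EU(n) - \EU_\infty^{\cA_\rho}$, or as the requirement that the epistemic quantity be driven to $0$ by $\cA_\rho$. Both versions must be shown to be refuted by the same instance. Once that is in place, the remaining steps are immediate from \Cref{thm:partition}.
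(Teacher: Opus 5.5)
Your proposal is correct and matches the paper's proof. Both instantiate on the noiseless two-region construction and use \Cref{thm:partition}(i) to get $\EU(n) = \EU_\infty^{\cA_\rho} = \log 2$, so (M) files the full bit as epistemic while (D) cannot. The only difference is cosmetic: the paper closes the contradiction through the dichotomy (non-epistemic must be aleatoric, contradicting $\AU = 0$ from \Cref{thm:partition}(ii)), whereas you build the dichotomy into your definition of a classification and read off $\log 2$ versus $0$ directly on the epistemic amount, which is equivalent since the aleatoric part is $\TU - E_P(n)$.
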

 
\begin{proof}
In the noiseless construction, (M) assigns all uncertainty to the epistemic class: $\EU = \TU = \log 2$ and $\AU = 0$. (D) classifies the same uncertainty as non-epistemic, since $\EU_\infty^{\cA_\rho} = \log 2 \neq 0$; under the dichotomy it must then be aleatoric, contradicting $\AU = 0$.
\end{proof}
 
\begin{remark}
\label{rem:retreat}
The defense that reducible means reducible in principle by some data concedes (D) as stated and relativizes reducibility to an acquisition class, which is \Cref{cor:trichotomy}. With a fully unrestricted class, (D) collapses into (M). But every operational use of (D), such as acquisition routing, validation-by-decay, decay hardcoded into the estimator, quantifies over $\cA_\rho$, where \Cref{thm:partition} applies (\Cref{sec:discussion}). Either the constructed uncertainty is aleatoric, and the aleatoric box contains uncertainty that a single observation eliminates; or it is epistemic, and the epistemic box contains uncertainty that no quantity of training data reduces.
\end{remark}

\section{Exact Value of an Observation}
\label{sec:value}
 
\begin{theorem}[Value identity]
\label{thm:value}
For any additional observation $Z$, acquired through any channel,
\begin{align}
\label{eq:value}
\Delta\EU(Z) \;:=\; \E\bigl[I(J; Y_\star \mid D_n)\bigr] - \E\bigl[I(J; Y_\star \mid D_n, Z)\bigr] \\=\; \E\bigl[I(J; Z \mid D_n)\bigr] - \E\bigl[I(J; Z \mid Y_\star, D_n)\bigr]
\end{align}
\end{theorem}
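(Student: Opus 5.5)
The identity is a two-way expansion of one conditional three-way mutual information. We work throughout conditionally on $D_n$ and take expectations over $D_n$ only at the end. The single tool is the chain rule for mutual information applied to $I(J; Y_\star, Z \mid D_n)$. No structural assumption of \Cref{def:problem} is needed, which matches the phrase ``acquired through any channel'': $Z$ may be any random variable jointly distributed with $(J, Y_\star, D_n)$.

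\textbf{Step 1 (expand by $Z$ first).} The chain rule gives
\begin{equation}
I(J; Y_\star, Z \mid D_n) = I(J; Z \mid D_n) + I(J; Y_\star \mid Z, D_n).
\end{equation}

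\textbf{Step 2 (expand by $Y_\star$ first).} The same chain rule in the other order gives
\begin{equation}
I(J; Y_\star, Z \mid D_n) = I(J; Y_\star \mid D_n) + I(J; Z \mid Y_\star, D_n).
\end{equation}

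\textbf{Step 3 (equate and rearrange).} Setting the two right-hand sides equal and moving terms yields
\begin{equation}
I(J; Y_\star \mid D_n) - I(J; Y_\star \mid D_n, Z) = I(J; Z \mid D_n) - I(J; Z \mid Y_\star, D_n).
\end{equation}
This holds pointwise in the realization of $D_n$, when each term is read as a mutual information under the conditional law given $D_n = d$. Taking $\E$ over $D_n$ gives \cref{eq:value}. Here $\E[I(J; Y_\star \mid D_n, Z)]$ is understood as the average over $(D_n, Z)$ of the pointwise conditional mutual information. This is exactly the expected conditional mutual information, so the averaging is consistent on both sides.

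\textbf{Main obstacle.} The only real issue is measure-theoretic rigor for general (continuous, or mixed) $Z$ and $Y_\star$, for instance in the Gaussian-process instance. There one must make sure the chain rule holds and that no $\infty - \infty$ arises when terms are subtracted.

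I would handle this by first noting that every term on the left is bounded by $I(J; Y_\star \mid D_n) \le H(J)$. This bound is finite whenever $J$ is discrete, as it is in both constructions. Finiteness of the left side then forces the terms on the right to be finite.

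For the general case, I would either assume $\E I(J; Y_\star, Z \mid D_n) < \infty$, or define the conditional mutual informations via the Kullback--Leibler divergences of regular conditional distributions. Under either route the chain rule is standard (Dobrushin's theorem), and the rearrangement in Step 3 is legitimate.
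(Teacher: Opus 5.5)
Your proof is correct and is exactly the paper's argument: expand $I(J; Y_\star, Z \mid D_n)$ by the chain rule in both orders, rearrange, and take expectations over $D_n$. One small correction to your finiteness aside: $I(J; Y_\star \mid D_n, Z)$ is not bounded by $I(J; Y_\star \mid D_n)$ (\Cref{rem:create} is a counterexample), but each of the four terms is bounded directly by a conditional entropy of $J$, which is finite when $J$ is finitely supported, so no finiteness needs to be transferred through the identity.
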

 
\begin{proof}
Expand $I(J; Y_\star, Z \mid D_n)$ by the chain rule in both orders, rearrange, take expectations.
\end{proof}
 
The value of an observation is not its information about the latent, it is that information minus its redundancy given the target. Both consequences below are consequences of the redundancy term.

\begin{corollary}[In-support data never helps and strictly hurts]
\label{cor:insupport}
Under exact unidentifiability, for any in-support observation $Z$ the first term of \cref{eq:value} vanishes, so $\Delta\EU(Z) = -\E\, I(J; Z \mid Y_\star, D_n) \le 0$, and $\EU(n)$ is non-decreasing in $n$. On the Gaussian-process amplitude construction the inequality is strict for every $n$ and every parameterization. Writing $A$ for the (common) training Gram matrix, $b$ and $c_z$ for the unit-amplitude cross-covariance vectors of the query and probe, $\gamma = c_{\star\star} - b^\top A^{-1} b$, $\delta = c_{z\star} - c_z^\top A^{-1} b$, and $a_j = a_j(x_\star)$, the conditional law of the probe given $(D_n, Y_\star)$ under component $j$ is Gaussian with slope and variance
\begin{align}
\label{eq:schur}
g_j &= \frac{a_j\, \delta}{\sigma^2 + a_j^2 \gamma}, \\
v_j &= r - \frac{a_j^2\, \delta^2}{\sigma^2 + a_j^2 \gamma}, \\
r &= c_{zz} + \sigma^2 - c_z^\top A^{-1} c_z.
\end{align}
and $v_0 = v_1$ forces $a_0 = a_1$, with no exceptional parameter manifold. Hence $I(J; Z \mid Y_\star, D_n) > 0$ pointwise on the event $\{\delta \ne 0\}$, which has full measure for analytic strictly positive-definite correlations under non-atomic $\rho$. The increase is $O(\delta^2)$, inheriting the decay of $c(\cdot, x_\star)$.
\end{corollary}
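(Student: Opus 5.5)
The plan is to derive everything from \Cref{thm:value}. Under exact unidentifiability, the law of an in-support $Z$ given $(D_n, J)$ does not depend on $J$. Hence $Z \perp J \mid D_n$ and $I(J; Z \mid D_n) = 0$ pointwise, so $\Delta\EU(Z) = -\E\, I(J; Z \mid Y_\star, D_n) \le 0$ by nonnegativity of conditional mutual information.

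Monotonicity in $n$ follows by taking $Z = (X_{n+1}, Y_{n+1})$, a fresh $\rho$-draw. This is in-support, and $D_{n+1} = (D_n, Z)$, so $\EU(n+1) - \EU(n) = -\Delta\EU(Z) \ge 0$.

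For strictness on the Gaussian-process amplitude construction, I will compute $I(J; Z \mid Y_\star, D_n)$ explicitly.

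\textbf{Step 1: joint law under component $j$.} Conditionally on $J = j$ and the covariates, $(Y_{1:n}, Y_\star, Z)$ is jointly Gaussian with noise variance $\sigma^2$. Since $a_j \equiv 1$ on the support, the training block $A$ and the probe row $c_z$ are common to both components. The query variance is $a_j^2 c_{\star\star} + \sigma^2$, or without noise depending on convention. The query cross-covariances are scaled to $a_j b$ and $a_j c_{z\star}$.

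\textbf{Step 2: two-stage Schur complement.} I first condition on $D_n$. This gives query residual variance $\sigma^2 + a_j^2 \gamma$, probe residual variance $r$, and cross-covariance $a_j \delta$. Then I condition on $Y_\star$, which gives the slope $g_j$ and variance $v_j$ of \cref{eq:schur}.

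\textbf{Step 3: reduce strictness to $v_0 \ne v_1$.} Given $(D_n, Y_\star)$, the probe is a mixture over the posterior of $J$ of the Gaussians $N(\mu_j + g_j Y_\star, v_j)$. The conditional mutual information is zero iff these component laws coincide on a positive-probability set. The posterior of $J$ given $(D_n, Y_\star)$ is nondegenerate, since $Y_\star$ has full support under both components. So it suffices that $v_0 \ne v_1$.

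\textbf{Step 4: show $v_0 = v_1$ forces $a_0 = a_1$.} Write $v_j = r - \delta^2\, \phi(a_j^2)$ with $\phi(t) = t/(\sigma^2 + t\gamma)$. The function $\phi$ is strictly increasing in $t \ge 0$, since $\phi'(t) = \sigma^2/(\sigma^2+t\gamma)^2 > 0$. So when $\delta \ne 0$, $v_0 = v_1$ forces $a_0^2 = a_1^2$. With nonnegative amplitudes this gives $a_0 = a_1$, contradicting the construction. (If amplitudes could be sign-flipped, I would argue via $g_j$ instead: $g_0 = g_1$ with $a_0^2 = a_1^2$ forces $a_0 = a_1$.) This yields pointwise positivity on $\{\delta \ne 0\}$ with no exceptional parameter set.

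\textbf{Step 5: full measure of $\{\delta \ne 0\}$.} I expect this to be the main obstacle. The quantity $\delta$, as a function of the probe location $x_z$, equals $c(x_z, x_\star) - c(x_z, X_{1:n}) A^{-1} b$. This is analytic in $x_z$, and it is the posterior covariance of an analytic strictly positive-definite kernel. It cannot vanish identically: evaluating at $x_z = x_\star$ would give $\gamma = 0$, which contradicts strict positive-definiteness when $x_\star$ is distinct from the training points. The zero set of a nonzero analytic function has Lebesgue measure zero. Non-atomic $\rho$ is what lets me pass from this to $\rho$-null. Care is needed here: a general non-atomic measure could charge a lower-dimensional analytic set, so I would assume $\rho$ has a density, or argue on connected domains in one dimension where zeros are isolated.

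\textbf{Step 6: the $O(\delta^2)$ rate.} The two component laws differ in variance by $\delta^2[\phi(a_1^2) - \phi(a_0^2)]$ and in slope by $O(\delta)$. The mixture mutual information is bounded by the expected KL divergence between the components. That KL is quadratic in the mean offset $(g_1 - g_0)Y_\star$ and in the variance ratio deviation, hence $O(\delta^2)$. This inherits the decay of $c(\cdot, x_\star)$ through $\delta$.
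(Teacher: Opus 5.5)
Your proposal is correct and follows the paper's proof in Appendix~A essentially step for step: the value identity with its first term vanishing, the Schur-complement computation of $g_j$ and $v_j$, strictness carried by the variance channel alone, an analytic-zero-set argument for $\{\delta \neq 0\}$, and a KL expansion for the $O(\delta^2)$ rate (your two-stage conditioning, first on $D_n$ and then on $Y_\star$, is the same computation as the paper's block inverse of $\Sigma_j$). Your local variations are, if anything, cleaner than the paper's: the witness $\delta(x_\star) = \gamma > 0$, obtained by continuing analytically in the probe location, replaces the paper's appeal to the empty-context case, and you rightly flag that Lebesgue-null does not imply $\rho$-null for a merely non-atomic $\rho$ (a density, or isolated zeros in one dimension, is needed), a step the paper passes over; two small fixes are to commit to the noisy-target convention you in fact use in Step~2 (with a noiseless target and $a_j \neq 0$ one gets $v_j = r - \delta^2/\gamma$, which is $j$-free, so only the slope channel would remain) and to include in Step~6 the intercept difference $\mu_1 - \mu_0$, which is proportional to $\delta\, b^\top A^{-1} y$, alongside $(g_1 - g_0)Y_\star$, which leaves the $O(\delta^2)$ conclusion unchanged.
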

 
The proof is the Schur-complement computation we defer to \Cref{app:schur} but the mechanism is visible in \cref{eq:schur}. Conditioning on a large $\lvert Y_\star \rvert$ makes the in-support probe informative about which amplitude produced it, because the probe--target cross-covariance scales with $a_j$. Each in-distribution sample therefore couples to the latent through the target and pushes the epistemic term up, not down.
 
\begin{corollary}[The acquisition objective]
\label{cor:acquisition}
The optimal probe location maximizes $I(J; Z_x \mid D_n) - I(J; Z_x \mid Y_\star, D_n)$, not the information-gain objective $I(Z_x; J \mid D_n)$ alone: an observation maximally informative about the latent can be worthless, or by \Cref{cor:insupport} strictly harmful, for the prediction target if its information is redundant given $Y_\star$.
\end{corollary}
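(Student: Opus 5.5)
The plan is to read \Cref{cor:acquisition} as the optimisation of the value identity \Cref{thm:value} over the family of probes $\{Z_x\}$, and then to exhibit a counterexample showing that the information-gain objective selects a different, worse location.

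\textbf{Step 1 (the objective).} For each candidate location $x$, let $Z_x \sim p(\cdot \mid x, J)$ be conditionally independent of $(D_n, Y_\star)$ given $J$. The optimal probe minimises the post-acquisition epistemic term $\E\, I(J; Y_\star \mid D_n, Z_x)$. Since $\E\, I(J; Y_\star \mid D_n)$ does not depend on $x$, this is equivalent to maximising $\Delta\EU(Z_x)$. By \cref{eq:value}, $\Delta\EU(Z_x)$ equals $\E\, I(J; Z_x \mid D_n) - \E\, I(J; Z_x \mid Y_\star, D_n)$, so the argmax sets coincide. This gives the first clause with no further work. If adaptive plans are of interest, the same argument applies at each step after conditioning on past probes, by replacing $D_n$ with $(D_n, Z_{1:t})$.

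\textbf{Step 2 (the gain objective can be worthless).} Here I need to show that maximising $I(J; Z_x \mid D_n)$ alone can choose a probe whose value is zero. I would build a small extension of the two-region construction. Take a latent $J = (J_1, J_2)$ of two independent fair bits and a query with $Y_\star = J_1$. Add a location $x_2$ with $Z_{x_2} = J$ (both bits revealed) and a location $x_1$ with $Z_{x_1} = J_1$.
\begin{itemize}
\item At $x_2$, $I(J; Z_{x_2}) = 2\log 2$ and $I(J; Z_{x_2} \mid Y_\star) = \log 2$, so $\Delta\EU = \log 2$.
\item At $x_1$, $I(J; Z_{x_1}) = \log 2$ and $I(J; Z_{x_1} \mid Y_\star) = 0$, so $\Delta\EU = \log 2$.
\end{itemize}
The gain objective therefore prefers $x_2$ at equal value, which shows the objectives differ but does not yet show harm. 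To get a strict separation, I would instead use a location $x_3$ with $Z_{x_3} = J_2$ plus a finer partial-information bit about $J_1$, or more simply compare $Z = J_2$ (gain $\log 2$, value $0$) against a noisy view of $J_1$ with gain below $\log 2$ but positive value. Then information gain strictly prefers the worthless probe. I would fix the noise so that $\log 2 - h(\varepsilon) < \log 2$ holds, which is immediate.

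\textbf{Step 3 (strictly harmful).} This clause comes directly from \Cref{cor:insupport}. An in-support $Z$ has $I(J; Z \mid D_n) = 0$, yet under the Gaussian-process amplitude construction its value is strictly negative, so it is harmful. The point to address is that the gain objective does not always rank in-support probes last. I would argue that in-support probes are precisely where the redundancy term is the sole contribution, which illustrates the failure of gain-based ranking: any rule that ignores the redundancy term treats them as merely neutral.

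The main obstacle is the "worthless" clause of Step 2, namely constructing an example where gain strictly prefers a zero-value probe rather than merely tying. The $J = (J_1, J_2)$ construction with a noisy $J_1$-probe is my first attempt.
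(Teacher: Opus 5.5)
Your Step 1 is the paper's own argument: the corollary is \Cref{thm:value} read as an argmax, because the pre-acquisition term does not depend on $x$. Run it conditionally on $D_n = d$, where the chain rule also holds, to match the pointwise form of the statement. Your Step 2 is correct and already complete, and it supplies an explicit separation that the paper only asserts. $Z = J_2$ has gain $\log 2$ and redundancy $\log 2$, hence value $0$. $Z' = J_1 \oplus N$, with $N \sim \Bern(\varepsilon)$ independent and $\varepsilon \in (0,\tfrac12)$, has gain $\log 2 - h(\varepsilon)$ and redundancy $0$. So the gain rule strictly prefers the worthless probe.

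The gap is in Step 3, and your own Step 1 setup causes it. Expanding $I(J, Z; Y_\star \mid D_n)$ by the chain rule in both orders gives $\Delta\EU(Z) = \E\, I(Y_\star; Z \mid D_n) - \E\, I(Y_\star; Z \mid J, D_n)$. You stipulate that $Z_x$ is conditionally independent of $(D_n, Y_\star)$ given $J$, so the second term vanishes and $\Delta\EU(Z_x) = \E\, I(Y_\star; Z_x \mid D_n) \ge 0$. No probe in your class can ever be harmful, so citing \Cref{cor:insupport} contradicts your own model.

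The GP amplitude construction escapes this only because, given the component $J$, probe and target remain correlated through the GP sample path. For the same reason it is not literally an instance of the conditional-independence clause of \Cref{def:problem} either. That coupling is exactly what lets the redundancy term exceed the gain. Drop the stipulation: \Cref{thm:value} is a pure chain-rule identity, valid for any $Z$ jointly distributed with $(J, Y_\star, D_n)$.

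Your ranking worry is also backwards. In-support probes have $I(J; Z \mid D_n) = 0$, so the gain rule puts them at the bottom, never at the top. What \Cref{cor:insupport} shows is that the gain rule prices a strictly harmful probe as neutral, not that a maximally informative probe is harmful.

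For the literal clause you need a probe coupled to the target given $J$. Take $W$ a fair bit independent of $J = (J_1, J_2)$, with $Y_\star = J_1 \oplus W$ and $Z = (J_2, W)$. Then the gain is $\log 2$ and the redundancy is $2\log 2$, so observing $Z$ raises the epistemic term from $0$ to $\log 2$. A noisy view of $J_2$ (gain below $\log 2$, value $0$) loses to this $Z$ under the gain rule.
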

 
\begin{remark}[Acquisition can create epistemic uncertainty]
\label{rem:create}
$\EU$ is not even monotone under acquisition in general. With $J, Y_\star$ independent uniform bits and a sensor $Z = J \oplus Y_\star$, $I(J; Y_\star) = 0$ but $I(J; Y_\star \mid Z) = \log 2$: a definition of epistemic uncertainty as ``what data reduces'' must contend with data that manufactures it.
\end{remark}
 
\section{Ensemble Disagreement Tracks Optimization, Not EU}
\label{sec:ensembles}
 
Ensembles of independently trained networks are the field's deployed epistemic estimator. Dispersion across members, typically $\hat I_{\mathrm{ens}} = H(\bar q_{\mathrm{ens}}) - \tfrac1M \sum_m H(q_m)$, is reported as the epistemic uncertainty and consumed by downstream decisions from adversarial-example detection \citep{smith2018understanding} to acquisition \citep{galislam2017deep}. We prove that in two complementary training regimes, bracketing practice from either side, the dispersion is controlled by the training procedure rather than by $\EU$. When the population objective pins the predictor down in the amortized regime of meta-trained, prior-fitted, and in-context predictors, dispersion collapses to zero beneath a positive $\EU$ (\Cref{thm:collapse}). When it does not, such as in fixed-dataset interpolation, the regime nearest a deep ensemble trained once on one dataset, dispersion is propagated initialization noise at a scale set by an optimization hyperparameter (\Cref{thm:inflation}). $\EU$ appears in neither formula.

\begin{assumption}
\label{ass:ensembles}
(A1) The predictor class $\cQ$ contains the Bayes predictive $\bar p(\cdot \mid D_n)$, and the population meta-risk $L(q) = \E[-\log q(Y_\star \mid D_n)]$ has $\bar p$ as its unique minimizer in $\cQ$ up to null modifications (automatic for unrestricted $\cQ$ by strict propriety of the log score). (A2) Training with resources $T$ and seed $\xi$ returns $q_{T,\xi}$ with $\E_{D_n}\, \dTV(q_{T,\xi}, \bar p) \to 0$ in probability over $\xi$ as $T \to \infty$. (A3) The dispersion functional $\Delta: \cQ^M \to \R_{\ge 0}$ is bounded, $\dTV$-continuous in each argument, and zero on the diagonal; on finite label alphabets this covers variance-of-means and $\hat I_{\mathrm{ens}}$.
\end{assumption}
 
\begin{theorem}[Collapse]
\label{thm:collapse}
Under \Cref{ass:ensembles}, $\E\, \Delta(q_{T,\xi_1}, \dots, q_{T,\xi_M}) \to 0$ as $T \to \infty$ for every ensemble size $M$, while $\EU(n)$ is a property of the problem alone and unchanged; in the two-region construction, $\EU(n) = \log 2 - h(\varepsilon) > 0$. The ensemble functional is therefore an inconsistent estimator of $\EU$ whenever $\EU_\infty^{\cA_\rho} > 0$.
\end{theorem}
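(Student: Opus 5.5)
The plan is to show that every ensemble member converges to the same object, the Bayes predictive $\bar p$, and then push that convergence through the continuity of $\Delta$. Once the members agree, $\Delta$ must vanish, because it is zero on the diagonal. The claim about $\EU$ is separate and needs no training argument: $\EU(n)$ is defined through the joint law of $(J, D_n, Y_\star)$ alone and never references $T$ or $\xi$. The two-region value is then read off \Cref{thm:rates}(i).

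\textbf{Step 1 (members approach $\bar p$).} By (A2), $\E_{D_n} \dTV(q_{T,\xi_m}, \bar p) \to 0$ in probability over $\xi_m$, for each $m$. The seeds enter only through this marginal statement, so their dependence structure is irrelevant. A union bound over the $M$ members then gives
\[
S_T := \max_m \E_{D_n} \dTV(q_{T,\xi_m}, \bar p) \to 0
\]
in probability. Identifiability of the target enters here: (A1) guarantees that the limit $\bar p$ is the same for every seed. (A2) is the statement that training reaches it, which (A1) makes meaningful.

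\textbf{Step 2 (transfer through $\Delta$).} This is the main obstacle. (A3) only gives continuity of $\Delta$ in each argument separately, pointwise in $D_n$, whereas we need a statement jointly over $M$ arguments and in expectation over $D_n$. The plan has four parts.
\begin{enumerate}[label=(\alph*)]
\item Pass from the $L^1$-in-$D_n$ statement to convergence in probability over $D_n$ by Markov's inequality. On a joint event of high probability over $(\xi, D_n)$, every $\dTV(q_{T,\xi_m}(\cdot \mid D_n), \bar p(\cdot \mid D_n))$ is then small.
\item Telescope $\Delta(q_1,\dots,q_M) - \Delta(\bar p,\dots,\bar p)$ by replacing one argument at a time. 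Separate continuity in each argument is enough provided it is uniform, or at least holds at the relevant points. On finite label alphabets, $\dTV$ is equivalent to the $\ell^1$ distance on a compact simplex, so continuity upgrades to uniform continuity. I will assume this, matching the paper's examples: variance-of-means and $\hat I_{\mathrm{ens}}$ are jointly uniformly continuous on the simplex product. In the general case I would read (A3) as joint continuity.
\item Conclude that $\Delta(q_{T,\xi_1},\dots,q_{T,\xi_M}) \to \Delta(\bar p,\dots,\bar p) = 0$ in probability.
\item Boundedness of $\Delta$ (A3) then yields $\E \Delta \to 0$ by bounded convergence: split the expectation on the good event (small) and its complement (probability small times $\sup \Delta$).
\end{enumerate}

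\textbf{Step 3 (inconsistency).} In the two-region construction with $\varepsilon \in (0,\tfrac12)$, \Cref{thm:rates}(i) gives $\EU(n) = \log 2 - h(\varepsilon) > 0$ for all $n$. The estimator's limit is $0$, so it cannot converge to $\EU$. More generally, $\EU(n) \ge \EU_\infty^{\cA_\rho}$ because $\cA_\rho$ contains the null plan. Hence whenever $\EU_\infty^{\cA_\rho} > 0$, the gap between the estimator's limit $0$ and the target persists, and does so at every $n$, which is the stated inconsistency.
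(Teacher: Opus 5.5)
Your proposal is correct and follows the same route as the paper: joint convergence in probability of the members to $(\bar p,\dots,\bar p)$, transfer through $\Delta$ using (A3), boundedness to upgrade to convergence in expectation, and the observation that $\EU(n)$ never references $T$ or $\xi$. Your added care is justified: seed independence is indeed unnecessary, and the paper's one-line appeal to (A3) silently needs joint continuity of $\Delta$ at the diagonal, which is exactly the reading of (A3) you adopt (separate continuity, even on a compact simplex, does not supply the uniformity in the other arguments that your telescoping step requires, so that assumption is doing real work).
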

 
\begin{proof}
By (A2) and independence of the seeds, the tuple $(q_{T,\xi_1}, \dots, q_{T,\xi_M})$ converges jointly in probability to $(\bar p, \dots, \bar p)$; by (A3), $\Delta$ converges to $\Delta(\bar p, \dots, \bar p) = 0$ in probability, and boundedness upgrades convergence to expectation. $\EU$ does not reference the algorithm.
\end{proof}
 
Seed randomness samples the algorithm's posterior, not the posterior over $J$. A faithful posterior-sampling ensemble in the two-region construction would have members predicting $\Bern(\varepsilon)$ or $\Bern(1{-}\varepsilon)$ with equal probability and would report $\hat I_{\mathrm{ens}} \approx \EU$ correctly. Consistent training collapses exactly the spread that was carrying the signal, because the unique log-loss minimizer is the hedged $\bar p$ and every member converges to it. The Bayesian reading of ensembles is invalidated by the thing that makes training work. 

\begin{theorem}[Inflation, closed form]
\label{thm:inflation}
Consider linear prediction $f_w(x) = x^\top w$ with $d > n$, noiseless data $y = Xw^\ast$, and gradient flow on $\tfrac12 \lVert Xw - y \rVert^2$ from $w_0 \sim \mathcal{N}(0, \alpha^2 I_d)$. The flow remains in $w_0 + \mathrm{row}(X)$ and converges to $w_\infty = P_N w_0 + X^\top(XX^\top)^{-1} y$ with $P_N = I - X^\top(XX^\top)^{-1}X$, so the ensemble predictive variance at any query $x_\star$ is $\alpha^2 \lVert P_N x_\star \rVert^2$. Against a point-mass prior on $w^\ast$ (true $\EU \equiv 0$) this is arbitrarily large in $\alpha$. Against a $\mathcal{N}(0, \tau^2 I)$ prior, the exact posterior predictive variance at $x_\star$ is $\tau^2 \lVert P_N x_\star \rVert^2$, so the ensemble misestimates the epistemic term by the factor $(\alpha/\tau)^2$, unbounded in both directions and controlled by an optimization hyperparameter.
\end{theorem}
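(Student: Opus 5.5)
The argument is a direct linear-algebra computation in three steps: characterize the gradient-flow limit, read off the ensemble variance, and compare it with the exact Gaussian posterior.

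\textbf{Step 1: the flow stays in an affine subspace.} Under gradient flow on $\tfrac12\lVert Xw-y\rVert^2$ we have $\dot w = -X^\top(Xw - y)$. The velocity therefore lies in $\mathrm{row}(X)$ at all times, so $w(t) - w_0 \in \mathrm{row}(X)$ and $P_N w(t) = P_N w_0$ for every $t$. Here $P_N$ is the orthogonal projector onto $\ker X$; this uses that $XX^\top$ is invertible, which holds when $X$ has full row rank $n<d$, and I would state this as a standing assumption.

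\textbf{Step 2: identify the limit.} Decompose $w(t) = P_N w_0 + u(t)$ with $u(t) \in \mathrm{row}(X)$. The restricted dynamics $\dot u = -X^\top(Xu - y)$ are a linear ODE. On $\mathrm{row}(X)$ the operator $X^\top X$ is positive definite, so $u(t)$ converges exponentially to the unique row-space solution of $Xu = y$, namely $u_\infty = X^\top(XX^\top)^{-1}y$. This gives the stated $w_\infty$.

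\textbf{Step 3: compute the ensemble variance.} The prediction is $x_\star^\top w_\infty = (P_N x_\star)^\top w_0 + x_\star^\top X^\top(XX^\top)^{-1}y$, using the symmetry of $P_N$. The second term is deterministic given the data. With $w_0 \sim \mathcal N(0,\alpha^2 I_d)$, the seed-induced prediction is Gaussian with variance $\alpha^2\lVert P_N x_\star\rVert^2$; this is the infinite-ensemble (population) variance over seeds.

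\textbf{Point-mass prior.} Here the posterior predictive is degenerate, so $\EU \equiv 0$. Unless $x_\star \in \mathrm{row}(X)$, we have $\lVert P_N x_\star\rVert > 0$, and the ensemble variance grows without bound as $\alpha \to \infty$.

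\textbf{Gaussian prior.} With $w^\ast \sim \mathcal N(0,\tau^2 I)$ and noiseless observations $Xw^\ast = y$, the posterior is the prior conditioned on a linear constraint. The row-space component of $w^\ast$ is pinned to $X^\top(XX^\top)^{-1}y$. The null-space component $P_N w^\ast$ is independent of $Xw^\ast$, because the two are jointly Gaussian with zero cross-covariance $\tau^2 P_N X^\top = 0$, so it retains its prior law $\mathcal N(0,\tau^2 P_N)$. The posterior predictive variance at $x_\star$ is therefore $\tau^2\lVert P_N x_\star\rVert^2$. Dividing gives the ratio $(\alpha/\tau)^2$ whenever $P_N x_\star \ne 0$, and this ratio ranges over $(0,\infty)$ as $\alpha$ varies.

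\textbf{Main obstacle.} The computation itself is routine. The step needing the most care is the noiseless Gaussian conditioning, which is a degenerate (singular) conditioning. I would handle it through the orthogonal-decomposition independence argument above rather than a density-based formula. Alternatively, one can take the $\sigma^2 \to 0$ limit of the Gaussian-noise posterior covariance $\tau^2 I - \tau^2 X^\top(XX^\top + \sigma^2/\tau^2)^{-1}X$, which converges to $\tau^2 P_N$.
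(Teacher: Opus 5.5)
Your proposal is correct and follows the same route as the paper, which gives no separate proof but embeds exactly this argument in the theorem statement: the flow is confined to $w_0 + \mathrm{row}(X)$, the limit is $P_N w_0 + X^\top(XX^\top)^{-1}y$, the variance is read off from $(P_N x_\star)^\top w_0$, and the noiseless Gaussian posterior leaves $\mathcal{N}(0,\tau^2 P_N)$ on the null-space component. Your explicit full-row-rank assumption and your caveat $P_N x_\star \neq 0$, which is needed both for the unboundedness in $\alpha$ and for the ratio $(\alpha/\tau)^2$ to be defined, are legitimate tightenings of conditions the paper leaves implicit.
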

 
\begin{remark}[Complementary regimes]
\label{rem:regimes}
\Cref{thm:collapse} assumes a unique population minimizer; \Cref{thm:inflation} is the degenerate case of an interpolation manifold of minimizers. Together when the objective pins the predictor down, disagreement collapses below $\EU$. When it does not, disagreement equals propagated initialization noise at a scale set by $\alpha$. In neither regime does $\EU$ appear in the formula for disagreement, and practical ensembles such as finite $T$, tuned $\alpha$, and early stopping interpolate between the regimes. \Cref{thm:inflation} is the sample-then-optimize mechanism of \citet{osband2018randomized} used contrapositively. That literature shows ensembles become posterior samples if initialization is drawn from the prior and the procedure is anchored, and the theorem is what remains when, as in practice, it is not.
\end{remark}
 
\section{A Falsification Test}
\label{sec:test}
 
\Cref{cor:insupport} gives mechanism-irreducibility a testable signature. Under exact unidentifiability, $\EU(n)$ is monotone non-decreasing, so a statistically significant decrease falsifies it. In the synthetic-prior regime, like the regime of prior-fitted networks and neural processes, where the components are known at training time, the per-context epistemic term $I_i = I(J; Y_\star \mid D_n = d_i)$ is computable exactly and bounded by $\log M$ for an $M$-component prior, so the test comes with finite-sample type-I control.
 
\begin{proposition}[Level-$\alpha$ falsification test]
\label{prop:test}
Let $H_0$ assert exact unidentifiability at $x_\star$. Estimate $\widehat{\EU}(n_1)$ and $\widehat{\EU}(n_2)$, $n_1 < n_2$, by Monte Carlo over $m_1, m_2$ i.i.d.\ contexts, and set $t_i = \log M \sqrt{\log(2/\alpha)/(2 m_i)}$. The test that rejects $H_0$ when $\widehat{\EU}(n_2) - \widehat{\EU}(n_1) < -(t_1 + t_2)$ has $\Prob_{H_0}(\mathrm{reject}) \le \alpha$.
\end{proposition}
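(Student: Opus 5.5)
The test is a Hoeffding argument combined with the monotonicity from \Cref{cor:insupport}. Under $H_0$, \Cref{cor:insupport} gives $\EU(n_2) \ge \EU(n_1)$. The rejection event therefore forces at least one of the two Monte Carlo estimates to miss its mean in the direction that produces an apparent decrease. Concretely, if $\widehat{\EU}(n_2) - \widehat{\EU}(n_1) < -(t_1+t_2)$ and $\EU(n_2) - \EU(n_1) \ge 0$, then
\[
\bigl(\widehat{\EU}(n_2) - \EU(n_2)\bigr) - \bigl(\widehat{\EU}(n_1) - \EU(n_1)\bigr) < -(t_1+t_2).
\]
Hence either $\widehat{\EU}(n_2) - \EU(n_2) < -t_2$ or $\widehat{\EU}(n_1) - \EU(n_1) > t_1$. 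A union bound reduces the claim to bounding each of these one-sided deviation probabilities.

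For each $i$, $\widehat{\EU}(n_i) = \frac{1}{m_i}\sum_{\ell} I(J; Y_\star \mid D_{n_i} = d_\ell)$ is an average of $m_i$ i.i.d.\ terms with mean exactly $\EU(n_i)$. This holds because the contexts are drawn from the true marginal of $D_{n_i}$ and each term is computed exactly in the synthetic-prior regime. Each term lies in $[0, \log M]$, since $I(J;Y_\star\mid d) \le H(J \mid d) \le \log M$ for an $M$-component latent. One-sided Hoeffding then gives
\[
\Prob\bigl(\pm(\widehat{\EU}(n_i) - \EU(n_i)) > t\bigr) \le \exp\bigl(-2 m_i t^2/(\log M)^2\bigr).
\]
Substituting $t_i = \log M\sqrt{\log(2/\alpha)/(2m_i)}$ makes this equal to $\alpha/2$. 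The union bound over the two events yields $\Prob_{H_0}(\mathrm{reject}) \le \alpha$. The two samples need not even be independent, since the union bound does not use independence.

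The step needing care is the use of \Cref{cor:insupport}. Its monotonicity statement is that $\EU(n)$ is non-decreasing when data are added in-support. We need this for all $n_1<n_2$, which follows by iterating the single-observation statement $\Delta\EU(Z)\le 0$ one draw at a time. At each step the added draw is in-support, and exact unidentifiability holds conditionally on the current $D_n$ by hypothesis, so each step is valid.

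A second point is that $\alpha$ here denotes the test level, not the initialization scale of \Cref{thm:inflation}.

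The remaining point is that each summand is an exact, unbiased evaluation of the conditional mutual information rather than itself an estimate. This is exactly what the synthetic-prior assumption provides. If it failed, an additional bias term would have to be absorbed into the thresholds.
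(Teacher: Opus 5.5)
Your proof is correct and follows the paper's argument: monotonicity of $\EU(n)$ from \Cref{cor:insupport}, reduction of the rejection event to a deviation of one of the two Monte Carlo means, one-sided Hoeffding with range $\log M$ giving $\alpha/2$ per event, and a union bound. Your explicit one-sided formulation (the $n_2$ estimate too low, or the $n_1$ estimate too high) is what makes each event come out at $\alpha/2$; the paper's phrase ``estimation error exceeding $t_1$ or $t_2$'' leaves this one-sidedness implicit.
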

 
\begin{proof}
Under $H_0$, $\EU(n_2) \ge \EU(n_1)$ by \Cref{cor:insupport}, so rejection requires an estimation error exceeding $t_1$ or $t_2$; each $I_i \in [0, \log M]$, so Hoeffding bounds each event's probability by $\alpha/2$, and the union bound completes.
\end{proof}
 
The scope is one-sided, and we state the limits. Observed flatness does not certify mechanism-irreducibility, since a sample-reducible component with a slow rate is flat at finite $n$; certification requires the prior, which the synthetic regime supplies and the black-box regime does not. And in the black-box regime the test needs a trustworthy estimator of $\EU$, which is what \Cref{thm:collapse} says ensembles are not. whether weaker black-box evidence of mechanism-irreducibility is attainable is an open question.
 
\section{Experiments}
\label{sec:experiments}

\begin{figure*}[htbp]
\centering
\includegraphics[width=\linewidth]{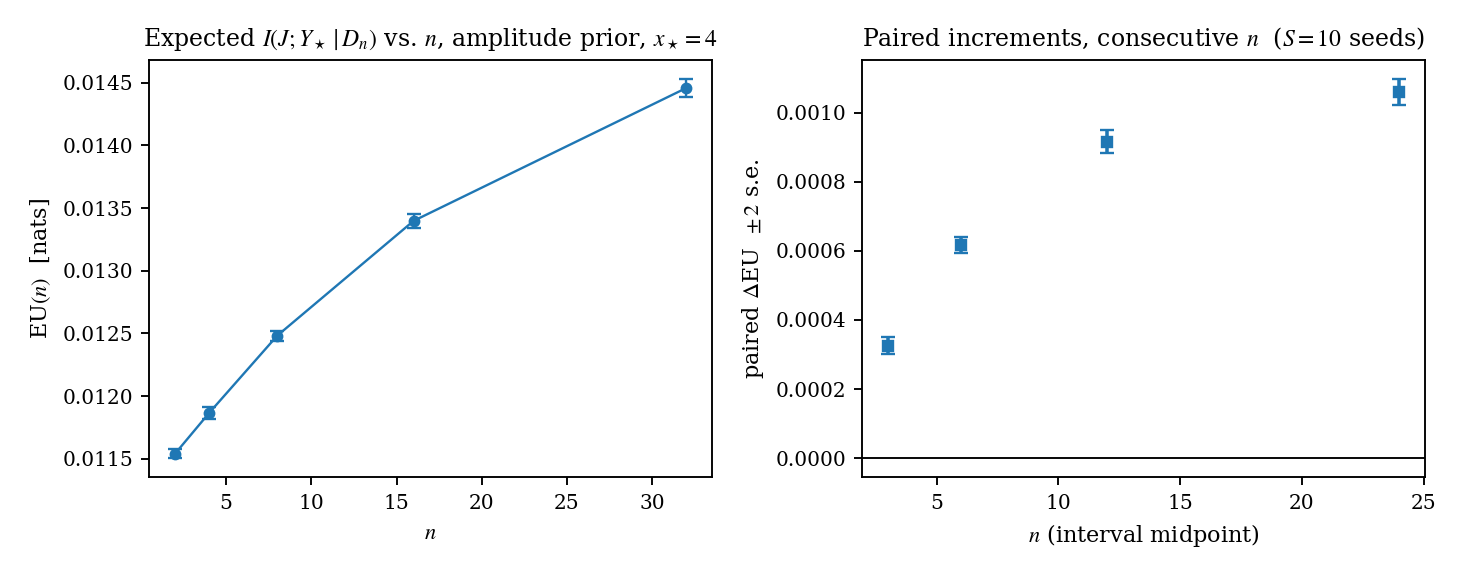}
\caption{The epistemic term strictly increases with in-distribution sample size under exact unidentifiability (\Cref{cor:insupport}). Left: $\E\, I(J; Y_\star \mid D_n)$ versus $n$ on the GP amplitude construction at $x_\star = 4$, mean $\pm 2$ s.e.\ over $10$ seeds. Right: paired increments between consecutive $n$; every increment is positive ($p \le 3.7 \times 10^{-10}$).}
\label{fig:increase}
\end{figure*}

\textbf{Validation of \cref{eq:schur}.} On the GP amplitude construction (squared-exponential correlation with $\ell = 1$, observation noise $\sigma^2 = 0.04$, amplitude slopes $\{0, 0.25\}$ activating outside the support $[-3, 3]$), the closed-form slope and variance of \cref{eq:schur} match brute-force Gaussian conditioning over random instances to maximum error $7.5 \times 10^{-16}$ and $4.8 \times 10^{-15}$. At $x_\star = 4$ with nested contexts (paired increments, $2000$ contexts per seed, $10$ seeds), $\EU(n)$ rises from $0.01154 \pm 0.00005$ at $n = 2$ to $0.01446 \pm 0.00011$ at $n = 32$, and every paired increment is positive with across-seed one-sided $p$-values between $3.7 \times 10^{-10}$ and $3.6 \times 10^{-13}$ (\Cref{fig:increase}). The increase decays with query distance as the $O(\delta^2)$ analysis predicts. At $x_\star = 8$ the predicted increment is of order $e^{-25}$, and the measured curve is correspondingly constant to five digits ($0.1155$) across all $n$. Apparent flatness of epistemic-term-versus-$n$ curves at far queries should therefore be read as monotone increase beneath measurement precision.

\begin{figure}[htbp]
\centering
\includegraphics[width=\linewidth]{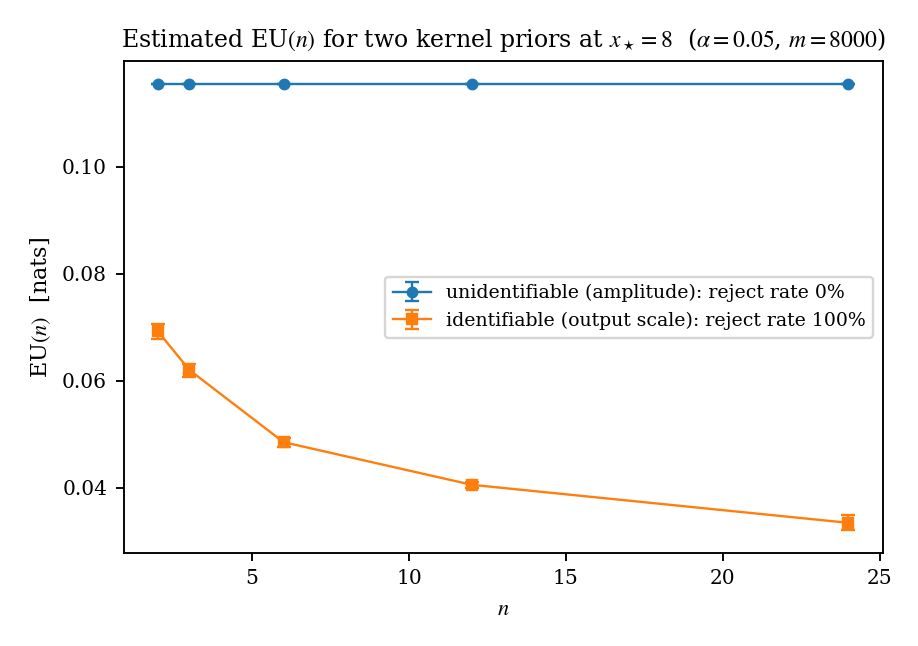}
\caption{$\EU(n)$ profiles for the two controls at $x_\star = 8$. The unidentifiable amplitude prior is flat (strictly increasing below numerical visibility) and the test fails to reject in $10/10$ runs. The identifiable output-scale prior decays and the test rejects in $10/10$ runs at the proven Hoeffding threshold.}
\label{fig:test}
\end{figure}

\textbf{Falsification test on both controls.} With $n_1 = 2$, $n_2 = 24$, $m = 8000$ contexts per arm, and $\alpha = 0.05$ (threshold $-0.02105$), ten independent tests on the unidentifiable amplitude prior all fail to reject ($T = +0.00000 \pm 0.00000$), and ten tests on an identifiable output-scale prior (stationary kernels with output variances $1$ and $4$) all reject ($T = -0.0359 \pm 0.0005$); see \Cref{fig:test}. The statistic on the positive control is identically zero to five digits across $160{,}000$ contexts because at far queries the per-context epistemic term concentrates and the deep-tail floor barely fluctuates over contexts, so the test passes not weakly but exactly. A lengthscale pair at a far query has $\EU \equiv 0$ at every $n$, because both unit-variance components revert to the same prior predictive there. Kernel identifiability from data and component disagreement at the query are independent requirements, and the test needs both. At moderate queries the lengthscale pair's $\EU(n)$ is a bump (rising to $n \approx 12$, then decaying), which is an instance of \Cref{rem:create} where in-distribution data transiently creating epistemic uncertainty before identification wins.

\begin{figure}[htbp]
\centering
\includegraphics[width=\linewidth]{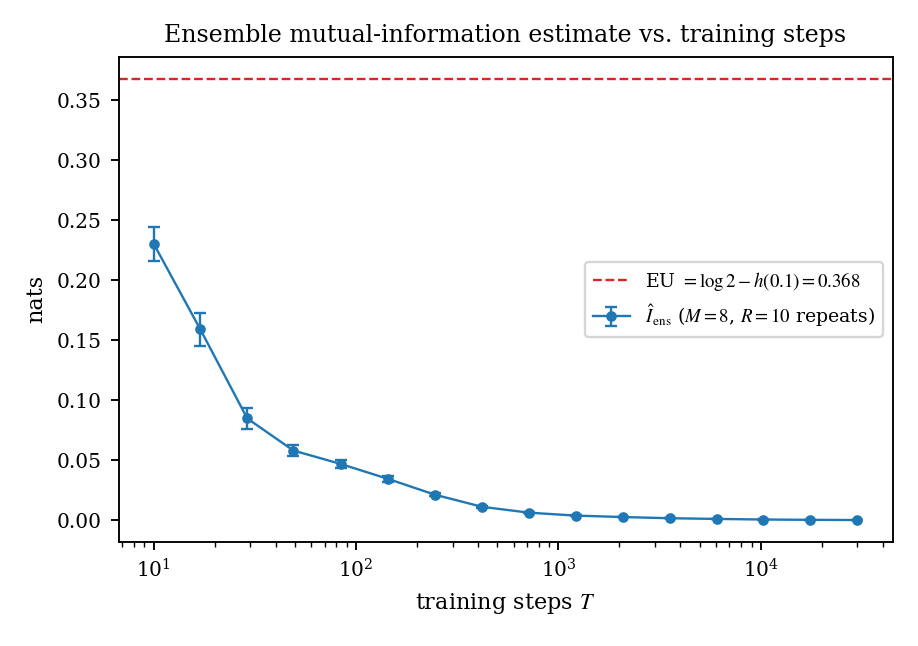}
\caption{The ensemble mutual-information estimate collapses by three orders of magnitude under training while its estimand $\EU = 0.3681$ (dashed) is constant. Mean $\pm 2$ s.e.\ over $10$ independent ensemble repeats.}
\label{fig:collapse}
\end{figure}

\textbf{Collapse.} On the two-region construction with $\varepsilon = 0.1$ ($\EU = \log 2 - h(0.1) = 0.3681$ nats, flat in $T$ by construction), $8$-member ensembles of a small MLP trained by log loss show $\hat I_{\mathrm{ens}}$ falling monotonically from $0.231 \pm 0.022$ at $T = 10$ to $0.00035 \pm 0.00003$ at $T = 3 \times 10^4$, over $10$ independent ensemble repeats (\Cref{fig:collapse}). The estimator is wrong in both phases. At initialization it reflects init dispersion (the \Cref{thm:inflation} knob), and trained it reflects nothing. At no point does it pass through the truth for a reason connected to the truth.

\begin{figure}[htbp]
\centering
\includegraphics[width=\linewidth]{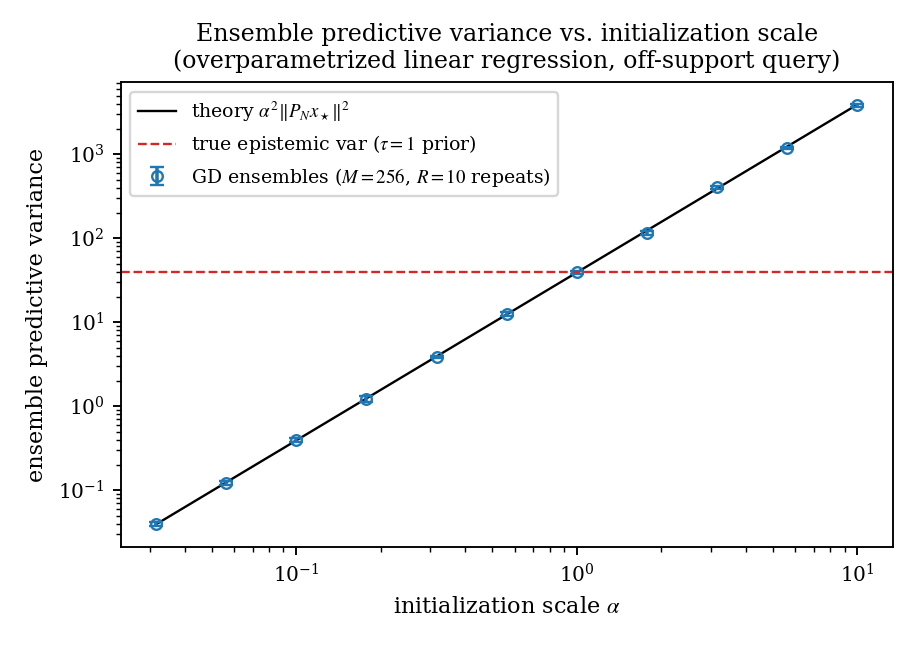}
\caption{Ensemble predictive variance at an off-support query equals $\alpha^2 \lVert P_N x_\star \rVert^2$ (solid) and crosses the true epistemic variance (dashed) only where the initialization scale happens to equal the prior scale.}
\label{fig:inflation}
\end{figure}

\textbf{Inflation.} In the overparametrized linear regime ($d = 60$, $n = 20$), gradient-descent ensembles of $256$ members match the closed form $\alpha^2 \lVert P_N x_\star \rVert^2$ across two and a half decades of initialization scale with median relative error $1.9\%$ ($10$ repeats), crossing the true Bayes epistemic variance only at the accidental calibration $\alpha = \tau$ (\Cref{fig:inflation}).
  
\section{Discussion}
\label{sec:discussion}

The results separate into a negative half and a constructive half. The negative half is \Cref{thm:partition,thm:inconsistency,thm:collapse,thm:inflation}. The taxonomy's two commitments contradict each other, and the deployed estimator of the epistemic term tracks optimization geometry rather than information in both of its regimes. The constructive half is the trichotomy with its acquisition-relative vocabulary, the value identity of \Cref{thm:value} with its corrected acquisition objective, and the falsification test of \Cref{prop:test}. We close with the operational consequences and the limits.

We posit that the trichotomy changes what an uncertainty number is for. A system that reports one epistemic number conflates two different recommendations to collect more data and collect different data. The constructions show the conflation is not hypothetical because mechanism-reducible uncertainty is exactly the regime of extrapolation under latent unidentifiability, which is where uncertainty estimates are consumed by Bayesian optimization, active learning, and safety fallbacks. Every acquisition loop that routes high-epistemic inputs to more deployment-distribution sampling spends budget that \Cref{cor:insupport} proves is worth at most zero, and the corrected objective of \Cref{cor:acquisition} prices the alternative mechanisms. Symmetrically, no recalibration removes mechanism-reducible uncertainty. A predictor at the floor passes every calibration diagnostic, because the marginal predictive is the true conditional law of the observable. The failure is therefore invisible to the validation toolkit and untouchable by post-hoc correction.

A reasonable objection is that no one holds (D) as a literal definition. Asked directly, most practitioners would say they mean something weaker, such as something like epistemic uncertainty is reducible in principle, by some data, and \Cref{rem:retreat} already shows this retreat relativizes reducibility to an acquisition class, which is the trichotomy. But the professed belief is not where the commitment lives, which instead lives in the artifacts. An acquisition loop that routes high-epistemic inputs to more deployment-distribution sampling encodes (D) over $\cA_\rho$ whether or not its authors would defend (D) in print. 

A validation protocol that certifies an estimator by exhibiting epistemic uncertainty decaying with dataset size encodes it. An estimator that hardcodes the decay into its functional form \citep{amini2020deep} encodes it most literally of all. The gap between the stated belief and the deployed one is the phenomenon. The literal reading of (D) is rarely defended and routinely operationalized, and our constructions bear on the operationalization, not on the talk-slide intuition. This is also why the failure resists the usual checks since a predictor at the floor passes every calibration diagnostic, so the toolkit that would catch a miscalibrated estimator is silent on one that is well-calibrated and measuring the wrong quantity. The commitment is invisible in two senses at once, unprofessed in the literature and undetectable by the validation suite, while remaining active in the estimators that use the number.

\bibliographystyle{unsrtnat}
\bibliography{bibliography}

@inproceedings{amini2020deep,
author = {Amini, Alexander and Schwarting, Wilko and Soleimany, Ava and Rus, Daniela},
title = {Deep evidential regression},
year = {2020},
isbn = {9781713829546},
publisher = {Curran Associates Inc.},
address = {Red Hook, NY, USA},
booktitle = {Proceedings of the 34th International Conference on Neural Information Processing Systems},
articleno = {1251},
numpages = {11},
url = {http://arxiv.org/abs/1910.02600},
location = {Vancouver, BC, Canada},
series = {NIPS '20}
}

@misc{baan2023uncertainty,
	title = {Uncertainty in {Natural} {Language} {Generation}: {From} {Theory} to {Applications}},
	shorttitle = {Uncertainty in {Natural} {Language} {Generation}},
	url = {http://arxiv.org/abs/2307.15703},
	doi = {10.48550/arXiv.2307.15703},
	urldate = {2026-06-09},
	publisher = {arXiv},
	author = {Baan, Joris and Daheim, Nico and Ilia, Evgenia and Ulmer, Dennis and Li, Haau-Sing and Fernández, Raquel and Plank, Barbara and Sennrich, Rico and Zerva, Chrysoula and Aziz, Wilker},
	month = jul,
	year = {2023},
	note = {arXiv:2307.15703},
}

@inproceedings{bengs2022pitfalls,
title={Pitfalls of Epistemic Uncertainty Quantification through Loss Minimisation},
author={Viktor Bengs and Eyke H{\"u}llermeier and Willem Waegeman},
booktitle={Advances in Neural Information Processing Systems},
editor={Alice H. Oh and Alekh Agarwal and Danielle Belgrave and Kyunghyun Cho},
year={2022},
url={https://openreview.net/forum?id=epjxT_ARZW5}
}

@InProceedings{bickfordsmith2023prediction,
  title = 	 {Prediction-Oriented Bayesian Active Learning},
  author =       {Bickford Smith, Freddie and Kirsch, Andreas and Farquhar, Sebastian and Gal, Yarin and Foster, Adam and Rainforth, Tom},
  booktitle = 	 {Proceedings of The 26th International Conference on Artificial Intelligence and Statistics},
  pages = 	 {7331--7348},
  year = 	 {2023},
  editor = 	 {Ruiz, Francisco and Dy, Jennifer and van de Meent, Jan-Willem},
  volume = 	 {206},
  series = 	 {Proceedings of Machine Learning Research},
  month = 	 {25--27 Apr},
  publisher =    {PMLR},
  url = 	 {https://proceedings.mlr.press/v206/bickfordsmith23a.html},
}

@InProceedings{depeweg2018decomposition,
  title = 	 {Decomposition of Uncertainty in {B}ayesian Deep Learning for Efficient and Risk-sensitive Learning},
  author =       {Depeweg, Stefan and Hernandez-Lobato, Jose-Miguel and Doshi-Velez, Finale and Udluft, Steffen},
  booktitle = 	 {Proceedings of the 35th International Conference on Machine Learning},
  pages = 	 {1184--1193},
  year = 	 {2018},
  editor = 	 {Dy, Jennifer and Krause, Andreas},
  volume = 	 {80},
  series = 	 {Proceedings of Machine Learning Research},
  month = 	 {10--15 Jul},
  publisher =    {PMLR},
  url = 	 {https://proceedings.mlr.press/v80/depeweg18a.html},
}

@article{derkiureghian2009aleatory,
	title = {Aleatory or epistemic? {Does} it matter?},
	volume = {31},
	issn = {01674730},
	shorttitle = {Aleatory or epistemic?},
	url = {https://linkinghub.elsevier.com/retrieve/pii/S0167473008000556},
	doi = {10.1016/j.strusafe.2008.06.020},
	language = {en},
	number = {2},
	urldate = {2026-06-09},
	journal = {Structural Safety},
	author = {Kiureghian, Armen Der and Ditlevsen, Ove},
	month = mar,
	year = {2009},
	pages = {105--112},
}

@article{foster1998asymptotic,
	title = {Asymptotic calibration},
	volume = {85},
	issn = {0006-3444, 1464-3510},
	url = {https://academic.oup.com/biomet/article-lookup/doi/10.1093/biomet/85.2.379},
	doi = {10.1093/biomet/85.2.379},
	language = {en},
	number = {2},
	urldate = {2026-06-09},
	journal = {Biometrika},
	author = {Foster, D.},
	month = jun,
	year = {1998},
	pages = {379--390},
}

@InProceedings{galislam2017deep,
  title = 	 {Deep {B}ayesian Active Learning with Image Data},
  author =       {Yarin Gal and Riashat Islam and Zoubin Ghahramani},
  booktitle = 	 {Proceedings of the 34th International Conference on Machine Learning},
  pages = 	 {1183--1192},
  year = 	 {2017},
  editor = 	 {Precup, Doina and Teh, Yee Whye},
  volume = 	 {70},
  series = 	 {Proceedings of Machine Learning Research},
  month = 	 {06--11 Aug},
  publisher =    {PMLR},
  url = 	 {https://proceedings.mlr.press/v70/gal17a.html},
}

@misc{gruber2023sources,
	title = {Sources of {Uncertainty} in {Supervised} {Machine} {Learning} -- {A} {Statisticians}' {View}},
	url = {http://arxiv.org/abs/2305.16703},
	doi = {10.48550/arXiv.2305.16703},
	urldate = {2026-06-09},
	publisher = {arXiv},
	author = {Gruber, Cornelia and Schenk, Patrick Oliver and Schierholz, Malte and Kreuter, Frauke and Kauermann, Göran},
	month = jan,
	year = {2025},
	note = {arXiv:2305.16703},
}

@misc{henning2022ood,
	title = {On out-of-distribution detection with {Bayesian} neural networks},
	url = {http://arxiv.org/abs/2110.06020},
	doi = {10.48550/arXiv.2110.06020},
	urldate = {2026-06-09},
	publisher = {arXiv},
	author = {D'Angelo, Francesco and Henning, Christian},
	month = feb,
	year = {2022},
	note = {arXiv:2110.06020},
}

@misc{houlsby2011bald,
	title = {Bayesian {Active} {Learning} for {Classification} and {Preference} {Learning}},
	url = {http://arxiv.org/abs/1112.5745},
	doi = {10.48550/arXiv.1112.5745},
	urldate = {2026-06-09},
	publisher = {arXiv},
	author = {Houlsby, Neil and Huszár, Ferenc and Ghahramani, Zoubin and Lengyel, Máté},
	month = dec,
	year = {2011},
	note = {arXiv:1112.5745},
}

@article{hullermeier2021aleatoric,
  author    = {H{\"u}llermeier, Eyke and Waegeman, Willem},
  title     = {Aleatoric and Epistemic Uncertainty in Machine Learning: An Introduction to Concepts and Methods},
  journal   = {Machine Learning},
  volume    = {110},
  number    = {3},
  pages     = {457--506},
  year      = {2021},
  doi       = {10.1007/s10994-021-05946-3},
  url       = {https://doi.org/10.1007/s10994-021-05946-3}
}

@inproceedings{kendall2017uncertainties,
author = {Kendall, Alex and Gal, Yarin},
title = {What uncertainties do we need in Bayesian deep learning for computer vision?},
year = {2017},
isbn = {9781510860964},
publisher = {Curran Associates Inc.},
address = {Red Hook, NY, USA},
booktitle = {Proceedings of the 31st International Conference on Neural Information Processing Systems},
url = {http://arxiv.org/abs/1703.04977},
pages = {5580–5590},
numpages = {11},
location = {Long Beach, California, USA},
series = {NIPS'17}
}

@inbook{kirsch2019batchbald,
author = {Kirsch, Andreas and Amersfoort, Joost van and Gal, Yarin},
title = {BatchBALD: efficient and diverse batch acquisition for deep Bayesian active learning},
year = {2019},
publisher = {Curran Associates Inc.},
address = {Red Hook, NY, USA},
booktitle = {Proceedings of the 33rd International Conference on Neural Information Processing Systems},
articleno = {631},
numpages = {12}
}

@inproceedings{malinin2018predictive,
author = {Malinin, Andrey and Gales, Mark},
title = {Predictive uncertainty estimation via prior networks},
year = {2018},
url = {http://arxiv.org/abs/1802.10501},
publisher = {Curran Associates Inc.},
address = {Red Hook, NY, USA},
booktitle = {Proceedings of the 32nd International Conference on Neural Information Processing Systems},
pages = {7047–7058},
numpages = {12},
location = {Montr\'{e}al, Canada},
series = {NIPS'18}
}

@inproceedings{kirchhof2025reexamining,
  author = {Kirchhof, Michael and Kasneci, Gjergji and Kasneci, Enkelejda},
  title = {Reexamining the Aleatoric and Epistemic Uncertainty Dichotomy},
  booktitle = {ICLR Blogposts 2025},
  year = {2025},
  date = {April 28, 2025},
  note = {https://iclr-blogposts.github.io/2025/blog/reexamining-the-aleatoric-and-epistemic-uncertainty-dichotomy/},
  url  = {https://iclr-blogposts.github.io/2025/blog/reexamining-the-aleatoric-and-epistemic-uncertainty-dichotomy/}
}

@inproceedings{osband2018randomized,
author = {Osband, Ian and Aslanides, John and Cassirer, Albin},
title = {Randomized prior functions for deep reinforcement learning},
year = {2018},
url = {http://arxiv.org/abs/1806.03335},
publisher = {Curran Associates Inc.},
address = {Red Hook, NY, USA},
booktitle = {Proceedings of the 32nd International Conference on Neural Information Processing Systems},
pages = {8626–8638},
numpages = {13},
location = {Montr\'{e}al, Canada},
series = {NIPS'18}
}

@misc{ruegamer2026epistemic,
	title = {On the {Epistemic} {Uncertainty} of {Overparametrized} {Neural} {Networks}},
	url = {http://arxiv.org/abs/2605.25234},
	doi = {10.48550/arXiv.2605.25234},
	urldate = {2026-06-09},
	publisher = {arXiv},
	author = {Rügamer, David},
	month = may,
	year = {2026},
	note = {arXiv:2605.25234},
}

@inproceedings{smith2018understanding,
  author    = {Smith, Lewis and Gal, Yarin},
  title     = {Understanding Measures of Uncertainty for Adversarial Example Detection},
  booktitle = {Proceedings of the 34th Conference on Uncertainty in Artificial Intelligence (UAI 2018)},
  pages      = {560--569},
  year       = {2018},
  url        = {https://auai.org/uai2018/proceedings/papers/207.pdf}
}

@article{walmsley2020galaxy,
  author  = {Walmsley, Mike and Smith, Lewis and Lintott, Chris and Gal, Yarin and Bamford, Steven and Dickinson, Hugh and Fortson, Lucy and Kruk, Sandor and Masters, Karen and Scarlata, Claudia and Simmons, Brooke and Smethurst, Rebecca and Wright, Darryl},
  title   = {Galaxy Zoo: Probabilistic Morphology through Bayesian CNNs and Active Learning},
  journal = {Monthly Notices of the Royal Astronomical Society},
  volume  = {491},
  number  = {2},
  pages   = {1554--1574},
  year    = {2020},
  month   = jan,
  doi     = {10.1093/mnras/stz2816},
  url     = {https://doi.org/10.1093/mnras/stz2816}
}

@inproceedings{chen2026querylevel,
title={Query-Level Uncertainty in Large Language Models},
author={Lihu Chen and Gerard de Melo and Fabian M. Suchanek and Ga{\"e}l Varoquaux},
booktitle={The Fourteenth International Conference on Learning Representations},
year={2026},
url={https://openreview.net/forum?id=11QZITAMUO}
}

@InProceedings{wimmer2023quantifying,
  title = 	 {Quantifying aleatoric and epistemic uncertainty in machine learning: Are conditional entropy and mutual information appropriate measures?},
  author =       {Wimmer, Lisa and Sale, Yusuf and Hofman, Paul and Bischl, Bernd and H\"ullermeier, Eyke},
  booktitle = 	 {Proceedings of the Thirty-Ninth Conference on Uncertainty in Artificial Intelligence},
  pages = 	 {2282--2292},
  year = 	 {2023},
  editor = 	 {Evans, Robin J. and Shpitser, Ilya},
  volume = 	 {216},
  series = 	 {Proceedings of Machine Learning Research},
  month = 	 {31 Jul--04 Aug},
  publisher =    {PMLR},
  url = 	 {https://proceedings.mlr.press/v216/wimmer23a.html},
}

@article{xu2022minimum,
	title = {Minimum {Excess} {Risk} in {Bayesian} {Learning}},
	volume = {68},
	copyright = {https://creativecommons.org/licenses/by/4.0/legalcode},
	issn = {0018-9448, 1557-9654},
	url = {https://ieeexplore.ieee.org/document/9780255/},
	doi = {10.1109/TIT.2022.3176056},
	number = {12},
	urldate = {2026-06-09},
	journal = {IEEE Transactions on Information Theory},
	author = {Xu, Aolin and Raginsky, Maxim},
	month = dec,
	year = {2022},
	pages = {7935--7955},
}

\clearpage
\appendix
\thispagestyle{empty}

\onecolumn

\aistatstitle{Appendix}
\crefalias{section}{appendix}
\crefalias{subsection}{appendix}

\section{Proof of \Cref{cor:insupport} (Schur computation)}
\label{app:schur}
 
The inequality $\Delta\EU(Z) \le 0$ and the monotonicity of $\EU(n)$ are immediate from \Cref{thm:value}. Under exact unidentifiability the conditional law of an in-support $Z$ given $(D_n, J)$ is $J$-free, so $I(J; Z \mid D_n) = 0$ and $\Delta\EU(Z) = -\E\, I(J; Z \mid Y_\star, D_n) \le 0$; iterating over $n$ gives $\EU(n+1) \ge \EU(n)$. It remains to prove strictness on the amplitude construction, i.e.\ that $I(J; Y_z \mid Y_\star, D_n) > 0$ pointwise on a full-measure event, by showing the two conditional laws of $Y_z$ given $(D_n, Y_\star)$ differ.
 
\textit{Joint covariance.} Fix the context inputs $X = (x_1, \dots, x_n) \subset \mathrm{supp}\,\rho$, an in-support probe location $x_z$, and the query $x_\star \notin \mathrm{supp}\,\rho$. Under component $j$ the vector $(y, Y_z, Y_\star)$ is zero-mean Gaussian. Because the amplitudes satisfy $a_j \equiv 1$ on $\mathrm{supp}\,\rho$, all in-support blocks are $j$-free: $\mathrm{Var}(y) = A := K + \sigma^2 I$ with $K_{il} = c(x_i, x_l)$, $\mathrm{Cov}(y, Y_z) = c_z$ with $(c_z)_i = c(x_i, x_z)$, and $\mathrm{Var}(Y_z) = c_{zz} + \sigma^2$. The query-involving blocks scale with the query amplitude $a_j := a_j(x_\star)$: $\mathrm{Cov}(y, Y_\star) = a_j b$ with $b_i = c(x_i, x_\star)$, $\mathrm{Cov}(Y_z, Y_\star) = a_j c_{z\star}$ with $c_{z\star} = c(x_z, x_\star)$, and $\mathrm{Var}(Y_\star) = a_j^2 c_{\star\star} + \sigma^2$.
 
\textit{Conditioning.} Write $u = (y, Y_\star)$ with covariance $\Sigma_j = \begin{psmallmatrix} A & a_j b \\ a_j b^\top & a_j^2 c_{\star\star} + \sigma^2 \end{psmallmatrix}$ and cross-covariance $\mathrm{Cov}(Y_z, u) = (c_z^\top,\; a_j c_{z\star})$. The Schur complement of $A$ in $\Sigma_j$ is $s_j = a_j^2 c_{\star\star} + \sigma^2 - a_j^2\, b^\top A^{-1} b = \sigma^2 + a_j^2 \gamma$ with $\gamma := c_{\star\star} - b^\top A^{-1} b > 0$, and the block inverse is
\begin{equation}
\Sigma_j^{-1} = \begin{pmatrix} A^{-1} + \dfrac{a_j^2}{s_j} A^{-1} b\, b^\top A^{-1} & -\dfrac{a_j}{s_j} A^{-1} b \\[1ex] -\dfrac{a_j}{s_j}\, b^\top A^{-1} & \dfrac{1}{s_j} \end{pmatrix}.
\end{equation}
The conditional law of $Y_z$ given $u$ under component $j$ is Gaussian with mean $\beta_j^\top u$ where $\beta_j^\top = \mathrm{Cov}(Y_z, u)\, \Sigma_j^{-1}$. Multiplying out, and writing $\delta := c_{z\star} - c_z^\top A^{-1} b$ for the screened probe--query cross-covariance, the coefficient on $Y_\star$ is $g_j = a_j \delta / s_j$, the coefficient on $y$ is $c_z^\top A^{-1} - (a_j^2 \delta / s_j)\, b^\top A^{-1}$, and the conditional variance is
\begin{equation}
v_j = (c_{zz} + \sigma^2) - \mathrm{Cov}(Y_z, u)\, \Sigma_j^{-1}\, \mathrm{Cov}(u, Y_z) = r - \frac{a_j^2 \delta^2}{\sigma^2 + a_j^2 \gamma}, \qquad r := c_{zz} + \sigma^2 - c_z^\top A^{-1} c_z,
\end{equation}
where the middle equality uses $\mathrm{Cov}(Y_z, u)\, \Sigma_j^{-1}\, \mathrm{Cov}(u, Y_z) = c_z^\top A^{-1} c_z + a_j^2 \delta^2 / s_j$, the cross terms collecting into $(c_{z\star} - c_z^\top A^{-1} b)^2$. This establishes \cref{eq:schur}. The $n = 0$ case is recovered by taking $A$ empty: $\delta \to c_{z\star}$, $\gamma \to c_{\star\star}$, $r \to c_{zz} + \sigma^2$.
 
\textit{Degeneracy analysis.} Suppose $\delta \neq 0$ and ask when the two conditional laws coincide. Slope equality $g_0 = g_1$ reads $a_0(\sigma^2 + a_1^2\gamma) = a_1(\sigma^2 + a_0^2\gamma)$, i.e.\ $\sigma^2(a_0 - a_1) = a_0 a_1 \gamma (a_0 - a_1)$, i.e.\ $a_0 a_1 \gamma = \sigma^2$ since $a_0 \neq a_1$: a measure-zero manifold in the parameters. But variance equality $v_0 = v_1$ reads $a_0^2(\sigma^2 + a_1^2\gamma) = a_1^2(\sigma^2 + a_0^2\gamma)$, i.e.\ $\sigma^2(a_0^2 - a_1^2) = 0$, which forces $a_0 = a_1$ (amplitudes are nonnegative) and contradicts the construction, with no exceptional manifold. The variance channel alone therefore carries strictness. Whenever $\delta \neq 0$, the two conditional laws are Gaussians of different variance, the pointwise KL divergence is positive, and $I(J; Y_z \mid Y_\star, D_n = (X, y)) > 0$ for every realization of $y$.
 
\textit{The event $\{\delta(X) \neq 0\}$ has full measure.} For a strictly positive-definite, real-analytic correlation (e.g.\ squared-exponential), $\delta(X) = c_{z\star} - c_z^\top (K(X) + \sigma^2 I)^{-1} b(X)$ is a real-analytic function of the configuration $(X, x_z)$, and it is not identically zero (the empty-context case gives $\delta = c_{z\star} > 0$). The zero set of a non-trivial real-analytic function is Lebesgue-null, so under non-atomic $\rho$ the event $\{\delta \neq 0\}$ has probability one, and taking expectations preserves strict positivity: $\E\, I(J; Y_z \mid Y_\star, D_n) > 0$.
 
\textit{Leading order.} The slope difference is $\Delta g = \delta\,(a_0/s_0 - a_1/s_1) = O(\delta)$ and contributes $O(\delta^2)$ to the KL divergence through the mean channel; the variance difference is $\Delta v = \delta^2 (a_1^2/s_1 - a_0^2/s_0) = O(\delta^2)$ and contributes $O(\delta^4)$. The increase is therefore $O(\delta^2)$ to leading order, and $\delta$ inherits the decay of $c(\cdot, x_\star)$: for a squared-exponential correlation with lengthscale $\ell$, the increment at query distance $D$ from the support is of order $e^{-D^2/\ell^2}$. The closed forms above were validated against brute-force Gaussian conditioning over random instances to maximum error $7.5 \times 10^{-16}$ (slope) and $4.8 \times 10^{-15}$ (variance). \qed

\section{Proof of \Cref{cor:overlap}}
\label{app:overlap}

\emph{Reduction to the probe count.} Condition on the covariates. Given $X_{1:n}$ and $J$, the labels are independent, and the labels at $X = 0$ have law $\Bern(\tfrac12)$ for both values of $J$; their conditional law given $J$ is therefore $J$-free, so they are independent of $(J, Y_\star)$ and of the labels at $X = 1$. Conditioning on $D_n$ is thus equivalent, for the joint law of $(J, Y_\star)$, to conditioning on the $K := \#\{i : X_i = 1\}$ labels observed at $X = 1$, which are i.i.d.\ $\Bern(q_J)$ probes in the sense of \Cref{thm:rates}(ii), with $K \sim \mathrm{Bin}(n, \eta)$ independent of $J$. Hence $\EU_\eta(n) = \E_K[\EU_K]$ with $\EU_k$ as in \Cref{thm:rates}.

\emph{Averaging the bounds.} The bounds of \Cref{thm:rates}(ii) hold for every $k \ge 0$; at $k = 0$ they read $c_\varepsilon/2 \le c_\varepsilon \le 1$, which holds in nats since $c_\varepsilon \le \log 2 < 1$ (and would fail in bits). Taking expectations against the binomial generating function $\E[s^K] = (1 - \eta + \eta s)^n$ at $s = \rho_B^2$ (lower) and $s = \rho_B$ (upper), and simplifying the lower exponent via $1 - \rho_B^2 = 1 - 4\varepsilon(1-\varepsilon) = (1 - 2\varepsilon)^2$, gives \cref{eq:overlap}. At $\eta = 1$ both bounds recover \Cref{thm:rates}(ii) with $k = n$ exactly; at $\eta = 0$ the lower bound is the constant $c_\varepsilon/2$, consistent with the exact flat value $c_\varepsilon$ of \Cref{thm:rates}(i).

\emph{Finite-budget floor.} Bernoulli's inequality gives $(1 - \eta(1-2\varepsilon)^2)^n \ge 1 - n\eta(1-2\varepsilon)^2 \ge \tfrac12$ whenever $n\eta(1-2\varepsilon)^2 \le \tfrac12$, and the lower bound of \cref{eq:overlap} gives $\EU_\eta(n) \ge c_\varepsilon/4$.

\emph{Scope.} $\EU_\eta(n)$ is the training curve under $n$ i.i.d.\ draws from $\rho_\eta$; the corresponding irreducible quantity $\EU_\infty^{\cA_{\rho_\eta}}$ is $0$ for every $\eta > 0$, which is the content of consequence (ii): the inconsistency of \Cref{thm:inconsistency} is exact only at $\eta = 0$, while the finite-budget statement is what survives off the boundary.

\section{Experimental configuration}
\label{app:experiments}

All experiments run with 10 seeds. Reported uncertainties are $\pm 2$ standard errors across seeds.
 
\textit{Strict increase (\Cref{fig:increase}).} GP amplitude construction of \Cref{sec:setup} with squared-exponential correlation with $\ell = 1$, observation noise $\sigma^2 = 0.04$, amplitude slopes $\{0, 0.25\}$ activating outside $r_0 = 3$, context inputs $X \sim \mathrm{Unif}(-3, 3)$, query $x_\star = 4$. Context sizes $n \in \{2, 4, 8, 16, 32\}$ are evaluated on nested prefixes of a single draw per replicate, giving paired increments; $2000$ replicates per seed, $10$ seeds. Under exact unidentifiability the label posterior equals the prior, and the per-context epistemic term is the Jensen--Shannon divergence between the two component posteriors, computed by $4001$-point grid quadrature spanning $\pm 9$ posterior standard deviations. Increment $p$-values are across-seed one-sided $t$-tests ($9$ degrees of freedom). The Schur validation draws $20$ random instances ($n = 7$, random probe locations) and compares \cref{eq:schur} against direct block conditioning of the joint covariance.
 
\textit{Falsification test (\Cref{fig:test}).} Positive control: the amplitude prior above. Negative control: stationary squared-exponential kernels ($\ell = 1$) with output variances $1$ and $4$, identifiable from in-support data and disagreeing at the query. Test parameters: $n_1 = 2$, $n_2 = 24$, query $x_\star = 8$, $m = 8000$ contexts per arm, $\alpha = 0.05$, giving threshold $-(t_1 + t_2) = -0.02105$ with $M = 2$ components; $10$ independent tests per prior. Per-context epistemic terms use the context-dependent label posterior $\pi_j(D_n) \propto \pi_j\, p(y \mid X, k_j)$, which reduces to the prior under exact unidentifiability; mixture entropies by the same quadrature. Profile curves use $m = 600$ contexts per point over $5$ seeds.
 
\textit{Collapse (\Cref{fig:collapse}).} Two-region construction with $\varepsilon = 0.1$, so $\EU = \log 2 - h(0.1) = 0.3681$ nats. Context: $12$ label-free Bernoulli bits encoded $\pm 1$. Model: a $12 \to 32 \to 1$ tanh MLP emitting a Bernoulli logit, trained by log loss on targets $y_\star \sim \Bern(q_J)$ over fresh tasks per batch (batch $256$), Adam with $\mathrm{lr}_0 = 3 \times 10^{-3}$ decayed as $\mathrm{lr}_0/(1 + t/1500)$. Ensembles of $M = 8$ seeds; $\hat I_{\mathrm{ens}}$ evaluated on $512$ fixed contexts at $16$ log-spaced checkpoints up to $T = 3 \times 10^4$; $10$ independent ensemble repeats with disjoint seed pools.
 
\textit{Inflation (\Cref{fig:inflation}).} Overparametrized linear regression with $d = 60$, $n = 20$, a fixed random instance, noiseless labels. Full-batch gradient descent ($\mathrm{lr} = 0.05$, $6000$ steps, vectorized over the ensemble) from initializations $w_0 \sim \mathcal{N}(0, \alpha^2 I)$, $M = 256$ members, $11$ values of $\alpha$ log-spaced in $[10^{-1.5}, 10]$, $10$ repeats; compared against $\alpha^2 \lVert P_N x_\star \rVert^2$ and the Bayes line $\tau^2 \lVert P_N x_\star \rVert^2$ with $\tau = 1$.

\end{document}